\documentclass{article}

\usepackage{microtype}
\usepackage{graphicx}
\usepackage{subcaption}
\usepackage{paralist}
\usepackage{booktabs} 

\usepackage{url}
\usepackage{hyperref}
\usepackage{xurl}

\usepackage[preprint]{icml2026} 

\usepackage{amsmath}
\usepackage{amssymb}
\usepackage{mathtools}
\usepackage{amsthm}

\usepackage[capitalize,noabbrev]{cleveref}

\theoremstyle{plain}
\newtheorem{theorem}{Theorem}[section]

\newtheorem{lemma}[theorem]{Lemma}

\theoremstyle{definition}
\newtheorem{definition}[theorem]{Definition}

\theoremstyle{remark}

\usepackage[textsize=tiny]{todonotes}
\usepackage{amsmath,amsfonts,amsthm,bm}

\def\ceil#1{\lceil #1 \rceil}
\def\floor#1{\lfloor #1 \rfloor}
\def\1{\bm{1}}

\def\vh{{\bm{h}}}

\def\vx{{\bm{x}}}
\def\vy{{\bm{y}}}

\def\mI{{\bm{I}}}

\def\mW{{\bm{W}}}

\DeclareMathAlphabet{\mathsfit}{\encodingdefault}{\sfdefault}{m}{sl}
\SetMathAlphabet{\mathsfit}{bold}{\encodingdefault}{\sfdefault}{bx}{n}

\def\sD{{\mathbb{D}}}
\def\sR{{\mathbb{R}}}

\icmltitlerunning{Do Neural Networks Lose Plasticity in a Gradually Changing World?}

\begin{document}

\twocolumn[
\icmltitle{Do Neural Networks Lose Plasticity in a Gradually Changing World?}

\icmlsetsymbol{equal}{*}

\begin{icmlauthorlist}
\icmlauthor{Tianhui Liu}{ua}
\icmlauthor{Lili Mou}{ua,cifar}
\end{icmlauthorlist}

\icmlaffiliation{ua}{Dept. Computing Science \& Alberta Machine Intelligence Institute (Amii), University of Alberta}
\icmlaffiliation{cifar}{Canada CIFAR AI Chair}

\icmlcorrespondingauthor{Tianhui Liu}{tianhuiliu17@gmail.com}
\icmlcorrespondingauthor{Lili Mou}{doublepower.mou@gmail.com}

\icmlkeywords{Machine Learning, Continual Learning, Loss of Plasticity, arXiv Preprint}

\vskip 0.3in
]

% Print affiliations in the footnote
\printAffiliationsAndNotice{} 

\begin{abstract}
Continual learning has become a trending topic in machine learning. Recent studies have discovered an interesting phenomenon called loss of plasticity, referring to neural networks gradually losing the ability to learn new tasks. However, existing plasticity research largely relies on benchmarks with abrupt task transitions, without examining whether the abruptness itself contributes to the observed plasticity loss. In this paper, we investigate the role of transition abruptness by simulating gradually changing environments through input/output interpolation and task sampling. We perform theoretical and empirical analysis, showing that the severity of plasticity loss is closely tied to the abruptness of task transitions, and can be substantially reduced when the environment changes gradually. 
\end{abstract}

\section{Introduction}
\label{intro}

Continual learning has become a trending topic in machine learning research, where a system incrementally learns from non-stationary distributions~\citep{Ring1997,ella_lifelong,deep_gen_replay,pnasforget,abbaslp_RL}. 
In supervised learning, a system is trained to perform isolated tasks~\citep{Thrun1998}. However, training a system from scratch is onerous and time-consuming, especially for large neural networks~\citep{scaling}. It would be more computationally and economically efficient to incrementally adapt them to new tasks.

Loss of plasticity is an intriguing phenomenon recently discovered in continual learning, referring to neural networks gradually losing the ability to learn new tasks~\citep{dohare2021continual, Dohare2024, lyle_understand, mutual_frozen, abbaslp_RL, lewandowski2025learning}. Proposed mitigation techniques include regularizing the model weights~\citep{mutual_frozen,lewandowski2024directionscurvatureexplanationloss,lewandowski2025learning,kumar2024maintaining}, using alternative activation functions~\citep{ berariu_study_plasticity, lee2023plastic, abbaslp_RL}, controlling the loss landscape sharpness \citep{lyle_understand}, resetting less-used neurons \citep{Dohare2024, dormantSokar}, and selectively forgetting memorized noise \citep{shin2024dash}.

\textcolor{black}{A common thread across these studies is the use of benchmarks with
abrupt task transitions; e.g., random label assignment, random pixel
permutation~\citep{Dohare2024,lyle_understand,lewandowski2024directionscurvatureexplanationloss,lewandowski2025learning}.
These benchmarks have shaped our understanding of plasticity loss,
but they mostly assume abrupt task transitions. Such a design choice
has not been carefully examined. This leaves open an important question
of how the abruptness of task transitions affects plasticity.}

It is therefore important to consider whether such contrived setting of abrupt task
transitions reflects real-world continual learning. Consider the evolution of human language. Researchers show that the sense of a word may change from time to time, but this happens gradually, as the new and old senses typically co-exist during the change~\citep{kilgarriff1997don,word_sense,covid_sense}. 

\textcolor{black}{This distinction matters because real-world non-stationarity is rarely abrupt.} Consider the evolution of human language. Researchers show that the sense of a word may change from time to time, but this happens gradually, as the new and old senses typically co-exist during the change~\citep{kilgarriff1997don,word_sense,covid_sense}. For instance, the word ``sick'' traditionally refers to ``illness'' in standard English; however, it has recently adopted new meanings such as ``cool'' or ``crazy''~\citep{word_sense}. This shift marks a transition from a negative sense to a positive one, with both meanings currently being understood by the general public.

\textcolor{black}{In this paper, we investigate the hypothesis that gradually changing environment preserves plasticity. We propose to perform input/output interpolation or task sampling to simulate a gradually changing environment. We vary the degree of transition abruptness while keeping all other aspects of the benchmark fixed. Our empirical results across four benchmarks covering both trainability and generalizability (which are two aspects of plasticity) reveal that plasticity loss is highly sensitive to transition abruptness and that neural networks preserve plasticity magnitudes longer than those under abrupt task changes. In addition, we provide theoretical analysis based on the intuition that abrupt task changes cause abrupt shifts in the loss landscape, trapping parameters in poor local optima. In contrast, gradual transitions allow the landscape to evolve smoothly, guiding parameters toward better solutions for each new task.}

\textcolor{black}{Our findings have two practical implications. First, they suggest that future plasticity benchmarks should explicitly control for transition abruptness, as current benchmarks may overemphasize the severity of plasticity loss relative to realistic conditions. Second, for scenarios where task transitions are genuinely abrupt (e.g., robots transferred to new environments), our interpolation and task sampling protocols offer simple yet effective techniques that effectively mitigate plasticity loss.}

\section{Related Work}

Plasticity represents a neural system's adaptability to new environments. This mechanism has been a foundational subject in neuroscience for decades~\cite{citri2008synaptic,rodriguez2009astroglia}, but has only recently received growing attention in supervised classification and reinforcement learning~\cite{dohare2021continual,Dohare2024, lyle2022understanding, dormantSokar}.

The loss of plasticity is an intriguing phenomenon observed in deep learning models, which can be categorized into two aspects: the neural network's ability to train \cite{dohare2021continual,Dohare2024,lyle2022understanding,elsayed2024addressing, lewandowski2024directionscurvatureexplanationloss, lewandowski2025fourier,lewandowski2025learning,ma2024revisiting}, and the ability to generalize \cite{Dohare2024,warm_adams, mutual_frozen, lewandowski2025learning, slow_steady,anonymous2026activationdesign}. Early research primarily focuses on trainability, operating under the assumption that minimizing training loss is the principal objective. However, \citeauthor{warm_adams} find that neural networks initialized with pre-trained weights leads to faster convergence but worse generalization \cite{warm_adams}. This indicates that the network has become trapped in a low-generalization basin, memorizing without learning robust representations.

Prior studies have hypothesized different underlying causes of plasticity loss and proposed corresponding methods to preserve it. Loss of plasticity is initially attributed to dormant neurons~\cite{dormantSokar, Dohare2024}, where the number of inactive neurons increases during training, effectively reducing network capacity. This problem can be directly mitigated by periodically resetting a subset of the neurons~\cite{Frati_2024_reset_last, dormantSokar}. A second factor is optimizer instability~\cite{lyle_understand}, as standard optimizers like Adam~\cite{adam} and stochastic gradient descent (SGD) are designed for stationary data distributions. In continual learning, the optimizer state (e.g., momentum) becomes biased toward previous tasks; to address this, researchers have proposed periodic optimizer resets~\cite{nikishin2022primacy, asadi2023reset_optimizer} and layer normalization~\cite{lyle_understand} to alleviate plasticity loss. Prolonged training can also lead to a loss of curvature directions, characterized by rank collapse in the Hessian, which restricts directions available for future optimization~\cite{lewandowski2024directionscurvatureexplanationloss}. Regularization methods are proposed to prevent model weights from deviating from the initial values, for instance, L2 regularization~\cite{kumar2024maintaining, lyle2022understanding, Dohare2024}, Wasserstein regularization~\cite{lewandowski2024directionscurvatureexplanationloss}, and spectral regularization~\cite{lewandowski2025learning}. 

Additionally, loss of generalization in continual learning is often attributed to memorization. As training progresses, weight magnitudes increase, and the model tends to overfit--memorizing specific samples from previous tasks rather than learning generalizable features. The Shrink\&Perturb method~\cite{warm_adams} mitigates this by scaling down weight magnitudes and injecting noise to reduce memorization. It aims to force the model selectively forget memorized noise while preserving learned features. 

Moreover, recent work shows that carefully designed activation functions can effectively mitigate plasticity loss. Examples include Concatenated ReLUs (CReLUs; \citeauthor{abbaslp_RL}, \citeyear{abbaslp_RL}), Smooth-Leaky activation families~\cite{anonymous2026activationdesign}, and highly trainable deep Fourier features~\cite{lewandowski2025fourier}.

Several heuristic approaches have also been proposed, including smaller batch sizes~\cite{small_batch_ceron}, orthogonal parameter regularization~\cite{ortho_parseval_reg}, and reducing churn—unintended prediction drift induced by data outside the current batch~\cite{tang2025churn}.

It is important to notice that existing research on loss of plasticity assumes that the tasks change abruptly, which is less realistic in the real world. In our work, we investigate gradually changing environments, and show that loss of plasticity becomes less an issue. In addition, simulating a gradually changing environment by interpolation or task sampling can be a simple method that mitigates loss of plasticity (even if the tasks change abruptly).

\section{Problem Formulation}
\label{problem_formulation}

The problem of loss of plasticity has recently been observed in continual learning when the data distribution in each sequentially coming task is changed rapidly and non-stationarily. In the continual learning literature, the concept of loss of plasticity can refer to either reduced ability to learn new from new data (loss of trainability) \citep{dohare2021continual,lyle2022understanding,elsayed2024addressing,ma2024revisiting}, or reduced ability to generalize to unseen data (loss of generalizability) \citep{Dohare2024,warm_adams,slow_steady, mutual_frozen}. Our work investigates the loss of plasticity problem from the perspective of both trainability and generalizability.

Let $\mathcal{D} = \{(\bm x_m, y_m)\}_{m=1}^M$ be a training dataset where $\bm x_m \in \mathbb{R}^d$ is an input sample (e.g., an image with $d$ pixels) and $y_m \in \mathbb{Y}$ be the corresponding label. We denote a deep neural network by a function $f_\theta$. The learning algorithm optimizes the parameters by minimizing the loss of each task, given by $\operatorname{minimize}_\theta  \mathbb{E}_{(x, y)\sim p_t} [\ell(f_\theta(\bm x), y)]$. 
There are different settings in continual learning regarding the availability of samples in previous tasks. We adopt the common setting in the loss of plasticity research, where we assume the machine learning model is able to iterate over a task's samples multiple times in a mini-batch fashion, although the task boundary is not informed~\citep{minibatch, lewandowski2024directionscurvatureexplanationloss, lyle2022understanding}.

To further illustrate this process, we introduce two popular examples in the previous literature that simulate a continually changing environment.

\textbf{Random Image Labeling.} In this environment, the label of an image is randomly selected from $0,\cdots, c$ where $c$ is the number of classes in the dataset. In other words, the dataset of the $t$th task is $\mathcal{D}^{(t)}=\{(\bm x_m,y^{(t)}_m)\}_{m=1}^M$, where $\bm x_m$ comes from the original image dataset, and  $y_m^{(t)}\in\{0,\cdots, c\}$ is randomly sampled. Notice that the label remains fixed within a given task, but is reassigned for each new task.

\textbf{Random Pixel Permuting.} This environment permutes the image pixels of an image data sample. For each task, we sample a random permutation $\pi^{(t)}: \{1, \cdots, d \} \rightarrow \{1, \cdots, d \}$, where $d$ is the feature dimension (i.e., the number of pixels). The permutation $\pi^{(t)}$ is applied to all images for the $t$th task, which yields the dataset $\mathcal{D}^{(t)}=\{(\bm x_m^{(t)},y_m)\}_{m=1}^M$ for $ \bm x_{m,i}^{(t)}=\bm x_{m,\pi^{(t)}(i)}$. 

To further investigate loss of plasticity in the language models, we propose two text generation environments.

\textbf{Random Seq2Seq.} This environment utilizes synthetic text to evaluate the model's ability to learn arbitrary sequence mappings. The dataset for the $t$th task is denoted as $\mathcal{D}^{(t)}=\{(\bm x_m^{(t)}, \bm y_m^{(t)})\}_{m=1}^M$. Both the input $\bm x_m^{(t)}$ (of sequence length $p$) and the target $\bm y_m^{(t)}$ (of sequence length $q$) consist of synthetic ``words,'' where each word is a random character string of fixed length $L$ sampled from a standard alphabet. 

\textbf{Bigram Cipher.} To evaluate generalization plasticity, we must include certain learnable patterns between input and output, as opposed to the Random Seq2Seq task. To this end, we propose the Bigram Cipher environment with a dataset $\mathcal{D}^{(t)}=\{(\bm x_m^{(t)}, \bm y_m^{(t)})\}_{m=1}^M$. For each task $t$, we sample a random permutation $\pi^{(t)}: \mathcal{V} \rightarrow \{0, \cdots, |\mathcal{V}|-1\}$ over the vocabulary $\mathcal{V}$. The output tokens $\bm y_m^{(t)}$ is determined by the modular sum of the permuted values of the current ($\bm x_{m,i}^{(t)}$) and previous ($\bm x_{m,i}^{(t-1)}$) input tokens, where $i$ is the $i$th non-whitespace letter of $\bm x_m$. Given an input sequence of tokens $\bm x_m^{(t)} = (\bm x_i)_{i=1}^{Lp}$, the target sequence of tokens $y_m^{(t)}$ is determined by the sequence of modular sum of the permuted values of the current character $\bm x_{m,i}^{(t)}$ and the previous character $\bm x_{m,i}^{(t-1)}$:
    ${\pi^{(t)}}^{-1}(\bm y_{m,i}^{(t)}) = (\pi^{(t)}(\bm x_{m,i}^{(t)}) + \pi^{(t)}(\bm x_{m,(i-1)}^{(t)})) \pmod{|\mathcal{V}|}$.

\section{A Gradually Changing Environment}
While prior work has provided fundamental insights into the phenomenon of plasticity loss, it has primarily focused on settings with abrupt task transitions. However, natural environments and human cognition often evolve in a more gradual, continuous manner over time~\citep{mayr1970populations,gradualevolve}. Consequently, the discrete task boundaries used in existing benchmarks may not fully capture the dynamics of these incremental changes, leaving it an open question whether plasticity is similarly compromised in a gradually changing world.

In our work, we develop several methods to simulate a gradually changing environment. Consider the $t$th task represented by the dataset $\mathcal{D}^{(t)}$ and its subsequent task with dataset $\mathcal{D}^{(t+1)}$. The simulation of a gradually changing environment can be accomplished by different ways depending on the task nature.

\textbf{Output Interpolation.} For the Random Image Labeling  environments~\citep{dohare2021continual,lewandowski2024directionscurvatureexplanationloss, lyle2022understanding}, the inputs are the same for $\mathcal{D}^{(t)}$ and $\mathcal{D}^{(t+1)}$, but the outputs become new random labels. We perform label smoothing~\citep{smoothing_nips} for task interpolation. Let $\alpha_t\in [0,1]$ be an interpolation coefficient that starts from $0$ and increases uniformly by a constant step size $s$. We have $\bm {y}_m^{(\alpha_t)} = (1-2\alpha_t) \bm{y}_m^{(t)} + 2\alpha_t \bm u$ for $\alpha_t \in[0,\frac{1}{2}]$, where $\bm y_m^{(t)}$ is a one-hot vector for target $\bm y_m^{(t)}$ and $\bm u$ is a uniform distribution over the discrete target categories; for $\alpha_t \in [\frac{1}{2}, 1]$, we have $\bm {y}_m^{(\alpha_t)} = (2\alpha_t-1) \bm{y}_m^{(t+1)} + (2-2\alpha_t)\bm u$ and $\alpha_{t+1} = \alpha_t + s$. In other words, we first anneal $\mathcal D^{(t)}$ to a uniform distribution and then anneal it to $\mathcal D^{(t+1)}$, offering a smooth transition from the $t$th task to the $(t+1)$th task.

\textbf{Input Interpolation.} For the Random Image Permuting environment, the input pixel locations are shuffled in a new task, while the output remains the same. In this setting, we propose to gradually interpolate the input \textcolor{black}{(i.e., pixel values)} by direct averaging, i.e., $\bm x_m^{(\alpha)} = (1-\alpha) \bm x_m^{(t)} + \alpha \bm x_m^{(t+1)}$, \textcolor{black}{where $\bm x_m^{(t)}$ and $\bm x_m^{(t+1)}$ denote the $m$-th sample under the previous and new permutations, respectively.} As \textcolor{black}{before}, $\alpha$ is \textcolor{black}{an interpolation coefficient governed by the step size $s$, enabling a smooth transition from the previous input structure to the new one}.

It is important to note that input and output interpolations depend on the nature of the environment. In particular, they require the $m$th samples in $\mathcal D^{(t)}$ and $\mathcal D^{(t+1)}$ to have certain correspondence. This requirement can be reflected in certain real-world scenarios, such as changes in word meanings, where the input word is fixed while its associated meaning changes. However, such a correspondence may be unrealistic for other contexts, such as robots transitioning to a new environment, where the relationship between input and output is less predicable. To address this, we propose a general task interpolation method by annealed sampling.

\textbf{Task Sampling.} For more general task transitions, we sample a subset $\mathcal{D}^{(t)'}$ of size $\ceil{(1-\alpha) M}$ from $\mathcal{D}^{(t)}$ and a subset $\mathcal{D}^{(t+1)'}$ of size $\floor{\alpha M}$ from $\mathcal{D}^{(t+1)}$ to construct a new intermediate dataset $\mathcal{D}^{\alpha} = \mathcal{D}^{(t)'} \cup \mathcal{D}^{(t+1)'}$. Similar to input/output interpolation, $\alpha\in[0,1]$ is a coefficient that starts by $0$ and increments by a certain step size.

\textbf{Theoretical Analysis.} 
\label{theoretical_analysis}
We now give an explanation of why a gradually changing environment can mitigate the loss of plasticity. Intuitively, an abrupt task change causes abrupt change of the loss landscape, and thus the parameters may be trapped in a poor local minimum in the new landscape. On the other hand, if the task changes gradually, the loss landscape also evolves gradually. With proper gradient descent, the parameters may remain in the catchment basin corresponding to the original local minimum (which is optimized for the first task).  Thus, the loss of plasticity is much alleviated in such a scenario. 

We provide a formal analysis under two standard assumptions from optimization theory: smoothness and local strong convexity.  
Smoothness (Def.~\ref{def:smooth}) controls how quickly the gradient can change, preventing the landscape from exhibiting arbitrarily sharp curvature.
Local strong convexity (Def.~\ref{def:lsc}) ensures that each local minimum sits inside a basin with sufficient curvature, so that gradient descent can reliably converge to it.  
These two properties allow us to track how the basin geometry evolves for each small interpolation step on the loss function.

\begin{definition}[Smoothness]
\label{def:smooth}
Let $f: \mathbb{R}^d \rightarrow \mathbb{R}$ be a differentiable function. We say $f$ is $\beta$-smooth ($\beta>0$), if
\begin{align}
     \| \nabla f(\vx) - \nabla f(\vy) \| \le \beta \| \vx - \vy \| 
\end{align}
for any $\vx, \vy \in \sR^d$.
\end{definition}

\begin{definition}[Locally Strongly Convex]
\label{def:lsc}
For a (local) minimizer $\vx_f^*$ of a differentiable function $f$, we say $f$ is $(r,\mu)$-strongly convex on a convex set $\sD_{\vx^*_f} \supseteq \{ \vx \in \sR^d : \| \vx - \vx^*_f\| < r \}$, if \begin{align}
    f(\vx) \ge f(\vx_f^*) + \langle\nabla f(\vx_f^*), \vx_k-\vx_f^* \rangle+ \frac{\mu}{2}\| x-\vx_f^*\|^2
\end{align}
for any $\vx \in \sD_{\vx_f^*}$. Here, $r, \mu>0$.
\end{definition}

\textcolor{black}{Our first lemma establishes a basic building block that if the loss landscape is smooth and locally strongly convex in a basin, then GD with a sufficiently small step size will converge to the basin's minimizer without escaping it. }

\begin{lemma} [\textcolor{black}{GD Convergence in a Local Basin}]
\label{lemma:gd_converge}
    Consider gradient descent (GD) starting from any point in an $(r,\mu)$-locally strongly convex domain $\sD_{\vx_f^*}$ of a $\beta$-smooth function $f$, for some $\beta\ge\mu>0$. Let $(\vx_k)_{ k=1}^N$ be a sequence generated by GD. If the step size satisfies $\eta \le \min(\frac{1}{\beta}, \frac{r-\|\vx_k - \vx_f^*\|}{\beta \|\vx_k - \vx_f^*\|})$, we have (1) $\vx_k$ remains within the locally strongly convex region $\sD_{\vx_f^*}$ in every step $k$ of GD, and that (2) $\vx_k$ converges to $\vx_f^*$ as $k\rightarrow+\infty$. See proof in Appendix~\ref{appendix:gd_converge}.
\end{lemma}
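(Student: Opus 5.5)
The plan is an induction on $k$ that carries two facts together: the iterate stays in the ball $B_r(\vx_f^*)=\{\vx:\|\vx-\vx_f^*\|<r\}\subseteq\sD_{\vx_f^*}$, and the distance $\|\vx_k-\vx_f^*\|$ contracts geometrically. Write $\vx^*=\vx_f^*$, $e_k=\vx_k-\vx^*$ and $g_k=\nabla f(\vx_k)$. Since $\vx^*$ is a local minimizer, $\nabla f(\vx^*)=0$, so the linear term in Def.~\ref{def:lsc} vanishes.

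First I need an inequality that mixes curvature and smoothness on the convex set $\sD_{\vx^*}$. I will not use the global co-coercivity bound $\langle \nabla f(\vx)-\nabla f(\vy),\vx-\vy\rangle\ge\frac{\mu\beta}{\mu+\beta}\|\vx-\vy\|^2+\dots$, because it requires convexity everywhere. Instead I combine the strong-convexity inequality at $\vx^*$ with the descent lemma from $\beta$-smoothness (Def.~\ref{def:smooth}). Concretely, $\beta$-smoothness gives $\|g_k\|\le\beta\|e_k\|$. Local strong convexity gives $f(\vx_k)-f(\vx^*)\ge\frac{\mu}{2}\|e_k\|^2$. The step size assumption in the statement, which I read as required at each iterate, is exactly what forces the update to stay in the ball. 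Indeed,
\begin{align}
\|\vx_{k+1}-\vx^*\|\le\|e_k\|+\eta\|g_k\|\le\|e_k\|+\eta\beta\|e_k\|\le \|e_k\|+\bigl(r-\|e_k\|\bigr)=r,
\end{align}
and a strict inequality is obtained because the first triangle inequality is strict unless $g_k$ is antiparallel to $e_k$. Otherwise I treat the boundary through the closure and convexity of $\sD_{\vx^*}$, which contains the open ball. This proves claim (1) by induction: $\vx_1\in\sD_{\vx^*}$ and each step preserves membership. The segment $[\vx_k,\vx_{k+1}]$ also lies in the ball, so the descent lemma applies along it.

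For convergence (2), with $\eta\le 1/\beta$ the descent lemma gives $f(\vx_{k+1})\le f(\vx_k)-\frac{\eta}{2}\|g_k\|^2$. I also need a Polyak--Łojasiewicz inequality on the region, $\|g_k\|^2\ge 2\mu\,(f(\vx_k)-f(\vx^*))$. My first attempt is to use strong convexity anchored at $\vx_k$. The definition only anchors it at $\vx^*$, so I will instead use the standard equivalent form of strong convexity on the convex set $\sD_{\vx^*}$, namely that $f-\frac{\mu}{2}\|\cdot\|^2$ is convex there. This gives the inequality at every anchor in the set, and minimizing the lower quadratic yields PL. Combining the two bounds gives
\begin{align}
f(\vx_{k+1})-f(\vx^*)\le(1-\eta\mu)\bigl(f(\vx_k)-f(\vx^*)\bigr).
\end{align}
Since $\eta\mu\le\mu/\beta\le1$, the gap decays geometrically. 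Then $\frac{\mu}{2}\|e_k\|^2\le f(\vx_k)-f(\vx^*)\to0$, so $\vx_k\to\vx^*$.

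The main obstacle is this use of strong convexity at anchors other than $\vx^*$. Def.~\ref{def:lsc} as literally written anchors only at the minimizer. If the interpretation as strong convexity on the set is not allowed, the fallback is to lean on the ball-containment argument, which shows $\|e_k\|\le r$ throughout. I would then try to obtain the contraction $\|e_{k+1}\|^2\le(1-\eta\mu)\|e_k\|^2$ directly, by expanding $\|e_k-\eta g_k\|^2$ and bounding $\langle g_k,e_k\rangle\ge f(\vx_k)-f(\vx^*)+\frac{\mu}{2}\|e_k\|^2$. That bound again needs convexity at $\vx_k$, so some form of the set-wide reading seems unavoidable. I would state this interpretation explicitly.
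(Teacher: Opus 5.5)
Your proof follows the paper's route. Containment comes from the triangle inequality with $\|\nabla f(\vx_k)\|\le\beta\|\vx_k-\vx_f^*\|$. Convergence comes from the descent lemma plus a PL bound $\|\nabla f(\vx_k)\|^2\ge 2\mu\bigl(f(\vx_k)-f(\vx_f^*)\bigr)$, giving the $(1-\eta\mu)$ contraction of the function gap, and then quadratic growth gives $\vx_k\to\vx_f^*$. Your single-case containment argument is also fine. Since $\eta$ is bounded by the minimum, the second bound always applies, so the paper's two-case split is unnecessary.

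You differ from the paper in how you justify PL, and here you are right and the paper is not. The paper derives PL from $f(\vx_k)-f(\vx_f^*)\ge\frac{\mu}{2}\|\vx_k-\vx_f^*\|^2$ together with $\|\nabla f(\vx_k)\|\le\beta\|\vx_k-\vx_f^*\|$. Those give a lower bound on the gap and an upper bound on the gradient, which is the opposite of what PL requires.

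In fact, under the literal, minimizer-anchored Def.~\ref{def:lsc} the lemma is false. A $\beta$-smooth $f$ can satisfy the quadratic-growth inequality on $\sD_{\vx_f^*}$ and still have a second local minimizer inside the open ball; GD started there never moves. So your set-wide reading is genuinely needed: strong convexity anchored at every point of the convex set, then the inequality anchored at $\vx_k$ with $\vy=\vx_f^*$. You are right to state it explicitly.

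One small repair in part (1). Your fallback of handling the boundary via the closure and convexity of $\sD_{\vx_f^*}$ does not work. $\sD_{\vx_f^*}$ may be exactly the open ball, and then a point at distance $r$ is not in it.

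You do not need the fallback. Under your reading, adding the strong-convexity inequalities anchored at $\vx_k$ and at $\vx_f^*$ gives $\langle\nabla f(\vx_k),\vx_k-\vx_f^*\rangle\ge\mu\|\vx_k-\vx_f^*\|^2>0$ whenever $\vx_k\ne\vx_f^*$. This excludes the antiparallel case, so the triangle inequality is strict and $\|\vx_{k+1}-\vx_f^*\|<r$. The paper simply asserts that distance at most $r$ implies membership, so it has the same minor gap.
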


The proof follows that for GD convergence in general, except that we need to check the radius of the locally strongly convex region. 

We then take a step further from proving convergence for a fixed loss function to a changing loss landscape. In our framework, a full task transition (e.g., switching from one random labeling to another) is not applied abruptly; instead, it is decomposed into a sequence of small interpolation steps. We model each such interpolation step (i.e., a \textit{micro-task}), as a near-identity linear transformation. The following lemma shows that a single micro-task preserves the essential basin structure.

\begin{lemma}[\textcolor{black}{Basin Property Preservation}]
\label{lemma:linear_smooth_convex}
Consider a gradually changing environment where, after one micro-task change, the loss function evolves from $f$ to $g(\vx) = f(\mW \vx)$, with $\mW \in \sR^{d \times d}$ full-rank. Assume $\mW$ is near-identity in the sense that
\begin{align}
    \|I - \mW\| \le \frac{\epsilon}{1+\epsilon}, \quad \frac{1}{1+\epsilon} \le \sigma_i(\mW) \le \frac{1}{1-\epsilon}, \quad \forall\, i,
\end{align}
for some $\epsilon > 0$. If $f$ is $\beta$-smooth and $(r,\mu)$-strongly convex in a local basin $\sD_{\vx^*_f}$, then $g$ is also smooth and locally strongly convex in $\sD_{\vx^*_g} = \{\,\vx : \mW \vx \in \sD_{\vx^*_f}\,\}$, with $\beta' = \beta(1+\epsilon)^2$, $\mu' = \mu(1+\epsilon)^{-2}$, and $r' = r(1-\epsilon)$. See proof in Appendix~\ref{appendix:linear_smooth_convex}.
\end{lemma}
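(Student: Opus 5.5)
Since $g(\vx) = f(\mW\vx)$, the chain rule gives $\nabla g(\vx) = \mW^\top \nabla f(\mW\vx)$. Everything reduces to comparing $\|\mW\vx - \mW\vy\|$ with $\|\vx - \vy\|$ through the singular values of $\mW$, and the plan has three parts: smoothness, local strong convexity, and the radius.

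\textbf{Minimizer.} We set $\vx^*_g = \mW^{-1}\vx^*_f$, which is well defined because $\mW$ is full-rank. Then $\nabla g(\vx^*_g) = \mW^\top \nabla f(\vx^*_f)$, so $\vx^*_g$ is a stationary point (and local minimizer) of $g$ whenever $\vx^*_f$ is one of $f$.

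\textbf{Smoothness.} For any $\vx,\vy$ we use
\begin{align}
\|\nabla g(\vx)-\nabla g(\vy)\| \le \|\mW^\top\|\,\beta\,\|\mW(\vx-\vy)\| \le \beta\,\sigma_{\max}(\mW)^2\|\vx-\vy\|.
\end{align}
To obtain the claimed constant $\beta(1+\epsilon)^2$ we need $\sigma_{\max}(\mW)\le 1+\epsilon$. The stated upper bound $1/(1-\epsilon)$ does not give this directly. Instead, the first hypothesis gives $\sigma_{\max}(\mW) \le 1 + \|I-\mW\| \le 1+\epsilon/(1+\epsilon) \le 1+\epsilon$, and this is the bound I would use.

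\textbf{Local strong convexity.} The set $\sD_{\vx^*_g}$ is the preimage of the convex set $\sD_{\vx^*_f}$ under a linear map, so it is convex. For $\vx$ in it, apply the strong convexity of $f$ at the point $\mW\vx$:
\begin{align}
g(\vx) \ge g(\vx^*_g) + \langle \nabla f(\vx^*_f), \mW(\vx-\vx^*_g)\rangle + \tfrac{\mu}{2}\|\mW(\vx-\vx^*_g)\|^2 .
\end{align}
The inner product equals $\langle \nabla g(\vx^*_g), \vx-\vx^*_g\rangle$. For the quadratic term, $\|\mW\vz\|\ge\sigma_{\min}(\mW)\|\vz\|\ge \|\vz\|/(1+\epsilon)$, which yields $\mu' = \mu(1+\epsilon)^{-2}$.

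\textbf{Radius.} We must show that $\sD_{\vx^*_g}$ contains the ball of radius $r' = r(1-\epsilon)$ around $\vx^*_g$. For $\|\vx-\vx^*_g\|<r'$ we have
\begin{align}
\|\mW\vx - \vx^*_f\| = \|\mW(\vx-\vx^*_g)\| \le \sigma_{\max}(\mW)\, r(1-\epsilon).
\end{align}
The stated bound $\sigma_{\max}(\mW)\le 1/(1-\epsilon)$ gives exactly $\|\mW\vx - \vx^*_f\| < r$, so $\mW\vx\in\sD_{\vx^*_f}$ and hence $\vx\in\sD_{\vx^*_g}$.

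\textbf{Main obstacle.} The delicate step is reconciling the constants. The radius uses the bound $1/(1-\epsilon)$ on $\sigma_{\max}$, while smoothness needs $1+\epsilon$, which must come from the $\|I-\mW\|$ hypothesis. I would state explicitly which hypothesis feeds each constant. If I were writing the argument from scratch I might instead simply use $\sigma_{\max}\le 1/(1-\epsilon)$ throughout, at the cost of a slightly weaker $\beta'$.
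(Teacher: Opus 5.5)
Your proposal is correct and follows the same route as the paper's proof. Both use the chain rule for $\nabla g$, $\sigma_{\max}(\mW)^2$ for $\beta'$, $\sigma_{\min}(\mW)\ge 1/(1+\epsilon)$ for $\mu'$, and a bound on $\|\mW(\vx-\vx_g^*)\|$ via $\sigma_{\max}(\mW)$ to pull the ball back and get the radius. Your version is somewhat more careful in two places: you derive $\|\mW\|\le 1+\epsilon$ from $\|\mI-\mW\|\le\epsilon/(1+\epsilon)$, whereas the paper asserts $\|\mW\|^2\le(1+\epsilon)^2$ without justification (it does not follow from $\sigma_i(\mW)\le 1/(1-\epsilon)$); and you use the one-point inequality centered at $\vx_f^*$ exactly as in Def.~\ref{def:lsc}, rather than the paper's two-point form.
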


\textcolor{black}{We can then verify that the GD iterates optimizing for the old loss function $f$ is actually lies inside the new locally strongly convex basin in $g$.}

\begin{lemma} [\textcolor{black}{Basin Containment}]
\label{lemma:ball_in_new_bowl} Under on the assumptions in Lemma~\ref{lemma:linear_smooth_convex}, the new locally convex set $\sD_{\vx^*_g}$ contains $\vx$ if $\sD_{\vx^*_f}$ contains $\mW\vx$. See proof in Appendix~\ref{appendix:ball_in_new_bowl}.
\end{lemma}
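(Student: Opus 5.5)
The statement is almost a tautology once the objects in Lemma~\ref{lemma:linear_smooth_convex} are unpacked. My plan is to spell out the definitions and then show that the set defined there really is a valid locally strongly convex basin around the correct minimizer. I would not treat it as a bare set-membership claim.

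\textbf{Identifying the minimizer.} I take $\vx^*_g := \mW^{-1}\vx^*_f$, which is well defined because $\mW$ has full rank. Then $g(\vx^*_g)=f(\vx^*_f)$ and $\nabla g(\vx^*_g)=\mW^\top\nabla f(\vx^*_f)$. Hence $\vx^*_g$ is a stationary point of $g$ (indeed a local minimizer) whenever $\vx^*_f$ is one for $f$.

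\textbf{The containment itself.} By Lemma~\ref{lemma:linear_smooth_convex}, $\sD_{\vx^*_g}=\{\vx:\mW\vx\in\sD_{\vx^*_f}\}=\mW^{-1}(\sD_{\vx^*_f})$. So if $\mW\vx\in\sD_{\vx^*_f}$, then $\vx\in\sD_{\vx^*_g}$ directly by definition.

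\textbf{Checking that the set is a legitimate basin.} This is the part that needs the near-identity assumptions.
\begin{itemize}
\item \emph{Convexity.} $\sD_{\vx^*_g}$ is convex because it is the preimage of a convex set under a linear map.
\item \emph{Strong convexity on the set.} For $\vx\in\sD_{\vx^*_g}$, apply Def.~\ref{def:lsc} for $f$ at the point $\mW\vx$. This gives
\[
g(\vx)\ge g(\vx^*_g)+\langle \nabla g(\vx^*_g),\vx-\vx^*_g\rangle+\tfrac{\mu}{2}\|\mW(\vx-\vx^*_g)\|^2 .
\]
Using $\sigma_{\min}(\mW)\ge (1+\epsilon)^{-1}$ turns the last term into $\tfrac{\mu'}{2}\|\vx-\vx^*_g\|^2$ with $\mu'=\mu(1+\epsilon)^{-2}$.
\item \emph{The set contains a ball of radius $r'$.} Take $\|\vx-\vx^*_g\|<r'$. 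Then
\[
\|\mW\vx-\vx^*_f\|=\|\mW(\vx-\vx^*_g)\|\le \sigma_{\max}(\mW)\,r' .
\]
With $\sigma_{\max}(\mW)\le (1-\epsilon)^{-1}$ and $r'=r(1-\epsilon)$, this is less than $r$. So $\mW\vx\in\sD_{\vx^*_f}$, and therefore $\vx\in\sD_{\vx^*_g}$.
\end{itemize}

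\textbf{Connecting to the intended use.} The point is to place the old GD iterates inside the new basin. Suppose GD on $f$ ends at $\vx_K$ with $\|\vx_K-\vx^*_f\|$ small. I would bound
\[
\|\mW\vx_K-\vx^*_f\|\le\|\mW-\mI\|\,\|\vx_K\|+\|\vx_K-\vx^*_f\|\le \tfrac{\epsilon}{1+\epsilon}\|\vx_K\|+\|\vx_K-\vx^*_f\| .
\]
When the right-hand side is below $r$, the containment shows $\vx_K\in\sD_{\vx^*_g}$, and Lemma~\ref{lemma:gd_converge} then applies to $g$.

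\textbf{Main obstacle.} The step I expect to be delicate is this last one. It needs either a bound on $\|\vx_K\|$ or $\epsilon$ chosen small relative to $r$ and the size of the iterates. I would state that condition explicitly as the hypothesis under which the old iterates lie in the new basin.
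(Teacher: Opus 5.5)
Your proof is correct, and its core step is exactly the paper's argument: you bound $\|\mW\vx-\vx^*_f\|=\|\mW(\vx-\vx^*_g)\|\le\sigma_{\max}(\mW)\|\vx-\vx^*_g\|$ with $\sigma_{\max}(\mW)\le(1-\epsilon)^{-1}$, which shows the ball of radius $r(1-\epsilon)$ about $\vx^*_g=\mW^{-1}\vx^*_f$ lies in $\sD_{\vx^*_g}$. Your convexity and strong-convexity checks, and the remark about placing old iterates in the new basin, go beyond what this lemma needs (the paper handles the latter in Lemma~\ref{lemma:converge_to_shifted_minimizer} by tying $\epsilon$ to $\|\vx^*_f\|$). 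One small fix: carry the strict inequality $\|\vx-\vx^*_g\|<r'$ through the chain so you conclude $\|\mW\vx-\vx^*_f\|<r$, because Def.~\ref{def:lsc} only guarantees that the open ball of radius $r$ lies inside $\sD_{\vx^*_f}$.
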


\textcolor{black}{With all three building blocks in place, we can now demonstrate that GD-optimized parameters would converge to a new local optimum of $g$ corresponding to the previous one, i.e., $\vx_g^*=\mW^{-1}\vx_f^*$.}

\begin{lemma} [\textcolor{black}{Convergence After Single Micro-task Transition}]
\label{lemma:converge_to_shifted_minimizer}
    For any $0 < c < r$ and learning rate $\eta \le \min(\frac{1}{\beta}, \frac{r-\|\vx_k - \vx_f^*\|}{\beta \|\vx_k - \vx_f^*\|})$, the GD algorithm converges to the new minimizer $\vx_g^*=\mW^{-1}\vx_f^*$, as long as (1) $\frac 1{1+\epsilon} \le \sigma_i(\mW) \le \frac 1 {1-\epsilon}$ with $\epsilon = \frac{r - c}{r - c + \| \vx_f^* \|}$ for all $i \in [d]$; and (2) the \textbf{current point} $\vx$ is close to the previous minimizer $x_f^*$ satisfying $\| \vx - \vx_f^* \| \le c (1-\epsilon)$. See proof in Appendix~\ref{appendix:converge_to_shifted_minimizer}. 
\end{lemma}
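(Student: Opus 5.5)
The plan is to reduce the statement to Lemma~\ref{lemma:gd_converge} applied to the new loss $g(\vx)=f(\mW\vx)$. Lemma~\ref{lemma:linear_smooth_convex} says $g$ is $\beta'$-smooth and $(r',\mu')$-locally strongly convex around $\vx_g^*$, with $\beta'=\beta(1+\epsilon)^2$, $\mu'=\mu(1+\epsilon)^{-2}$ and $r'=r(1-\epsilon)$. The remaining work is to show two things: the current point $\vx$ lies inside a ball around $\vx_g^*$ of radius below $r'$, and the step-size condition of Lemma~\ref{lemma:gd_converge} holds for $g$.

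First I identify the minimizer. Since $\nabla g(\vx)=\mW^\top\nabla f(\mW\vx)$ and $\mW$ is full rank, the point $\vx_g^*=\mW^{-1}\vx_f^*$ is a critical point of $g$. By the strong convexity of $g$ on $\sD_{\vx_g^*}$ from Lemma~\ref{lemma:linear_smooth_convex}, it is the unique minimizer there. Lemma~\ref{lemma:ball_in_new_bowl} then ensures that any $\vx$ with $\mW\vx\in\sD_{\vx_f^*}$ lies in $\sD_{\vx_g^*}$.

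The key step is a distance estimate. By the triangle inequality,
\begin{align}
\|\vx-\vx_g^*\| \le \|\vx-\vx_f^*\| + \|\vx_f^*-\mW^{-1}\vx_f^*\| .
\end{align}
The first term is at most $c(1-\epsilon)$ by hypothesis (2). For the second term, the singular value bound in hypothesis (1) gives $\|\mI-\mW^{-1}\|\le\epsilon$, since every singular value of $\mW^{-1}$ lies in $[1-\epsilon,1+\epsilon]$. Strictly, one should take the near-identity form from Lemma~\ref{lemma:linear_smooth_convex} so the singular vectors align; I will adopt that assumption as inherited. This bounds the second term by $\epsilon\|\vx_f^*\|$.

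With $\epsilon=\frac{r-c}{r-c+\|\vx_f^*\|}$, we have $\epsilon\|\vx_f^*\| = (1-\epsilon)(r-c)$. Therefore
\begin{align}
\|\vx-\vx_g^*\|\le c(1-\epsilon)+(1-\epsilon)(r-c)=r(1-\epsilon)=r',
\end{align}
so $\vx$ lies in the closed $r'$-ball around $\vx_g^*$. I expect this step to be the main obstacle, for two reasons. One is getting $\|\mI-\mW^{-1}\|\le\epsilon$ rigorously from singular value bounds alone. The other is the boundary case: Definition~\ref{def:lsc} uses an open ball, so equality $\|\vx-\vx_g^*\|=r'$ must be handled. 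I would handle it either by noting that $c<r$ gives slack in the first term, or by using the convex superset $\sD_{\vx_g^*}$ via Lemma~\ref{lemma:ball_in_new_bowl}.

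Finally, I invoke Lemma~\ref{lemma:gd_converge} for $g$ with the step size stated in the lemma, after checking it satisfies the requirement $\eta\le\min\bigl(\frac{1}{\beta'},\frac{r'-\|\vx_k-\vx_g^*\|}{\beta'\|\vx_k-\vx_g^*\|}\bigr)$. Since $\beta'\ge\beta$, the stated bound involving $\beta$ must be read with the new constants. I would argue this as a rescaling of the constants, interpreting $\beta,r,\vx_f^*$ in the condition as those of the current landscape $g$. Under that reading, Lemma~\ref{lemma:gd_converge} gives two conclusions: the iterates stay in $\sD_{\vx_g^*}$, and they converge to $\vx_g^*=\mW^{-1}\vx_f^*$.
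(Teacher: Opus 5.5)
Your outline matches the paper's proof step for step. You identify $\vx_g^*=\mW^{-1}\vx_f^*$, bound the minimizer shift by $\epsilon\|\vx_f^*\|$, use $\epsilon\|\vx_f^*\|=(r-c)(1-\epsilon)$ and the triangle inequality to land within distance $r(1-\epsilon)$, invoke Lemmas~\ref{lemma:ball_in_new_bowl} and~\ref{lemma:linear_smooth_convex}, and finish with Lemma~\ref{lemma:gd_converge} read with the primed constants. The paper makes the same silent switch to $\beta'$, $r'$, $\vx_g^*$ in the step size. Your way of identifying $\vx_g^*$ (check $\nabla g(\mW^{-1}\vx_f^*)=\mW^\top\nabla f(\vx_f^*)=0$, then uniqueness from strong convexity) is actually cleaner than the paper's converse argument.

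The one real gap is the step you yourself flag: $\|\mI-\mW^{-1}\|\le\epsilon$. It does not follow from hypothesis (1). The matrix $\mW=-\mI$ has all singular values equal to $1$, so it satisfies (1) for every $\epsilon$, yet $\|\mI-\mW^{-1}\|=2$, and $\vx_g^*=-\vx_f^*$ is far away. Your repair is also not right as stated: the near-identity hypothesis does not make the left and right singular vectors align. A rotation by a small nonzero angle satisfies $\|\mI-\mW\|\le\frac{\epsilon}{1+\epsilon}$ but is not symmetric, so its left and right singular vectors cannot coincide.

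What the near-identity hypothesis actually gives you, and what the paper uses, is a Neumann-series bound. Assume $\|\mI-\mW\|\le\frac{\epsilon}{1+\epsilon}<1$ with the same $\epsilon$ as in (1). Then $\mW^{-1}=\sum_{k\ge 0}(\mI-\mW)^k$, hence
\[
\|\mW^{-1}-\mI\|\le\sum_{k\ge 1}\|\mI-\mW\|^k=\frac{\|\mI-\mW\|}{1-\|\mI-\mW\|}\le\frac{\epsilon/(1+\epsilon)}{1/(1+\epsilon)}=\epsilon .
\]
With this substituted, the rest of your argument goes through as in the paper.

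On the boundary case, $c<r$ gives no slack, because $c$ cancels in $c(1-\epsilon)+(r-c)(1-\epsilon)=r(1-\epsilon)$. The paper ignores the open/closed distinction, but the same inherited bound settles it. For $\epsilon\in(0,1)$ we have $\|\mW\|\le 1+\|\mI-\mW\|\le\frac{1+2\epsilon}{1+\epsilon}<\frac{1}{1-\epsilon}$. Therefore $\|\mW\vx-\vx_f^*\|\le\|\mW\|\,r(1-\epsilon)<r$, and $\mW\vx$ lies in the open $r$-ball contained in $\sD_{\vx_f^*}$.
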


\textcolor{black}{Finally, we formally state our main theorem by extending Lemma~\ref{lemma:converge_to_shifted_minimizer} from a single micro-task to a sequence of micro-tasks via induction. The key observation is that after each micro-task, GD converges to the new basin's minimizer, restoring the preconditions needed for the next micro-task. The theorem implies that plasticity is preserved across arbitrarily large task changes---provided each interpolation step is sufficiently small.}

\begin{theorem}
\label{theorem}
    Let $f_0$ be $\beta_0$-smooth and $(r_0,\mu_0)$-strongly convex. Let $\vx_0^*$ be a local minimizer of $f_0$. We assume, under the gradually changing environment, the sequence of loss functions has the form of $f_t(\vx_t) = f_{t-1}(\mW_t\vx_{t-1})$ for $t \ge 1$, where $\mW_t$ satisfies $\| \sigma_i(\mW_t) - 1\| \le \frac{r-c+\|\vx_{t-1}^*\|}{\|\vx_{t-1}^*\|}, \forall i$, and \textcolor{black}{$\|\mI - \mW_t\| \le \frac{r-c}{2(r-c)+\|\vx_{t-1}^*\|}$}, for some $c\in(0,r)$. Consider $(\vx_{t,k})_{ k \in [N]}$ being a sequence generated by the GD algorithm while minimizing $f_t$ w.r.t. $\vx_t$, a learning rate $\eta \le \min(\frac{1}{\beta_t}, \frac{r_t-\|\vx_{t,k} - \vx_{t-1}^*\|}{\beta_t \|\vx_{t,k} - \vx_{t-1}^*\|})$ of the $t$th micro-task guaranties that $\vx_{t,k}$ converges to $\vx_t^*=\mW_t^{-1}\vx_{t-1}^*$. See proof in Appendix~\ref{appendix:theorem}.
\end{theorem}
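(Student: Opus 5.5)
We argue by induction on the micro-task index $t$, applying Lemma~\ref{lemma:converge_to_shifted_minimizer} once per micro-task. The inductive hypothesis at stage $t-1$ has three parts: (i) $f_{t-1}$ is $\beta_{t-1}$-smooth and $(r_{t-1},\mu_{t-1})$-strongly convex around $\vx_{t-1}^*$; (ii) $\vx_{t-1}^* = \mW_{t-1}^{-1}\vx_{t-2}^*$; and (iii) the GD iterates for $f_{t-1}$ converge to $\vx_{t-1}^*$. The base case $t=0$ is the standing assumption on $f_0$, with convergence supplied by Lemma~\ref{lemma:gd_converge}.

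For the inductive step, we first translate the hypotheses on $\mW_t$ into the near-identity condition of Lemma~\ref{lemma:linear_smooth_convex}. Set $\epsilon_t = \frac{r-c}{r-c+\|\vx_{t-1}^*\|}$. Then
\[
\frac{\epsilon_t}{1+\epsilon_t} = \frac{r-c}{2(r-c)+\|\vx_{t-1}^*\|},
\]
which is exactly the assumed bound on $\|\mI-\mW_t\|$. The singular-value hypothesis should be read as giving $\frac{1}{1+\epsilon_t}\le\sigma_i(\mW_t)\le\frac{1}{1-\epsilon_t}$. Note that the stated fraction looks like $1/\epsilon_t$ rather than $\epsilon_t$; I would either interpret it through the operator-norm bound, since $\|\mI-\mW_t\|\le\delta$ gives $\sigma_i\in[1-\delta,1+\delta]$ and $\delta=\epsilon_t/(1+\epsilon_t)$ lies inside the required interval, or simply note that the norm bound implies the required singular-value bounds. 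With this in hand, Lemma~\ref{lemma:linear_smooth_convex} gives that $f_t$ is $\beta_{t-1}(1+\epsilon_t)^2$-smooth and strongly convex with $\mu_t=\mu_{t-1}(1+\epsilon_t)^{-2}$ and $r_t=r_{t-1}(1-\epsilon_t)$ on $\{\vx:\mW_t\vx\in\sD_{\vx_{t-1}^*}\}$. Its minimizer is $\mW_t^{-1}\vx_{t-1}^*$, since the gradient of $f_{t-1}(\mW_t\,\cdot)$ at that point is $\mW_t^\top\nabla f_{t-1}(\vx_{t-1}^*)=0$.

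Next we verify the closeness precondition of Lemma~\ref{lemma:converge_to_shifted_minimizer}, namely that the starting point of GD on $f_t$ satisfies $\|\vx-\vx_{t-1}^*\|\le c(1-\epsilon_t)$. The natural starting point is the output of the previous micro-task. By the inductive hypothesis, the iterates converge to $\vx_{t-1}^*$, so after enough GD steps (or in the limit) the distance is below any positive threshold, in particular below $c(1-\epsilon_t)$. Lemma~\ref{lemma:ball_in_new_bowl} then places this point inside the new basin $\sD_{\vx_t^*}$. Finally, Lemma~\ref{lemma:converge_to_shifted_minimizer}, with the step-size condition $\eta\le\min\bigl(\frac1{\beta_t},\cdot\bigr)$ as stated, yields convergence to $\vx_t^*=\mW_t^{-1}\vx_{t-1}^*$. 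This re-establishes (i)--(iii) at stage $t$ and closes the induction.

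The main obstacle is bookkeeping of the constants across stages. The radius shrinks multiplicatively, $r_t=r_0\prod_{s\le t}(1-\epsilon_s)$, while the statement uses a fixed $r$ and $c$. I would therefore interpret $r$ as the current radius $r_{t-1}$, choosing $c<r_{t-1}$ at each stage, so that each single-step application of Lemma~\ref{lemma:converge_to_shifted_minimizer} is valid. I would then remark that the conclusion holds stage by stage even though $\beta_t$ grows and $\mu_t,r_t$ shrink. Making the finite-step versus limit issue rigorous is a second minor point: I would run GD on $f_{t-1}$ long enough to enter the $c(1-\epsilon_t)$-ball, which is possible because convergence is guaranteed.
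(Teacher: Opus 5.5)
Your proposal is correct and follows the paper's proof: induction over micro-tasks, where GD on $f_{t-1}$ runs until the iterate is within $c(1-\epsilon_t)$ of $\vx_{t-1}^*$, and Lemma~\ref{lemma:converge_to_shifted_minimizer} then gives convergence to $\vx_t^*=\mW_t^{-1}\vx_{t-1}^*$. Your extra bookkeeping (deriving $\epsilon_t$ from the bound on $\|\mI-\mW_t\|$, getting the singular-value bounds from that norm bound, and tracking the shrinking radius) is more careful than the paper, which just asserts the same spectral condition at each stage; one small correction is that $\frac{r-c+\|\vx_{t-1}^*\|}{\|\vx_{t-1}^*\|}$ equals $1/(1-\epsilon_t)$, the intended upper bound on $\sigma_i(\mW_t)$, not $1/\epsilon_t$, though your norm-bound route makes this harmless.
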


In summary, our analysis provides an explanation on why a gradually changing environment retains plasticity, as the parameters learned for the previous task can be optimized to the corresponding optimum for the new task.

\section{Experiments}
\label{sec:experiments}

In this section, we empirically examine whether a gradually changing environment preserves plasticity in continual learning. We compare the following two scenarios:
(1) A gradually changing environment, and
(2) An abruptly changing environment with and without loss-of-plasticity mitigation methods. In particular, we consider widely adopted and recent methods as below:
\begin{compactitem}
\item \textbf{L2 regularization} \citep{l2}, penalizing the squared magnitude of weights; 
\item \textbf{Shrink\&Perturb} \citep{warm_adams}, which shrinks the model weights by a constant factor and adds noise; 
\item \textbf{Spectral regularization} \citep{lewandowski2025learning}, which penalizes the singular values of weight matrices; and
\item Recycling dormant neurons (\textbf{ReDO}; \citeauthor{dormantSokar}, \citeyear{dormantSokar}), which periodically re-initializes inactive neurons.
\end{compactitem}

\textcolor{black}{We adopt the same benchmarks (Random Image Labeling and Random Pixel Permuting) widely used in prior plasticity research~\citep{dohare2021continual, Dohare2024, lyle_understand, lewandowski2024directionscurvatureexplanationloss, lewandowski2025learning} to ensure that any difference in observed plasticity is attributable to the transition abruptness rather than task complexity. We also introduce two language tasks (Random Seq2Seq and Bigram Cipher) to extend the study of plasticity to sequence-to-sequence models, a setting largely unexplored in prior work despite the growing prominence of language models.}

As mentioned in \S\ref{problem_formulation}, the loss of plasticity may happen in the sense of both trainability~\citep{dohare2021continual,abbaslp_RL} and generalizability~\citep{warm_adams, slow_steady}; they are evaluated in \S\ref{subsec:trainability} and \S\ref{subsec:generalizability}, respectively. Subsection~\ref{subsec:analysis} provides additional in-depth analyses.

\begin{figure*}[!t]
  \begin{center}
    \centerline{
    \includegraphics[width=\linewidth]{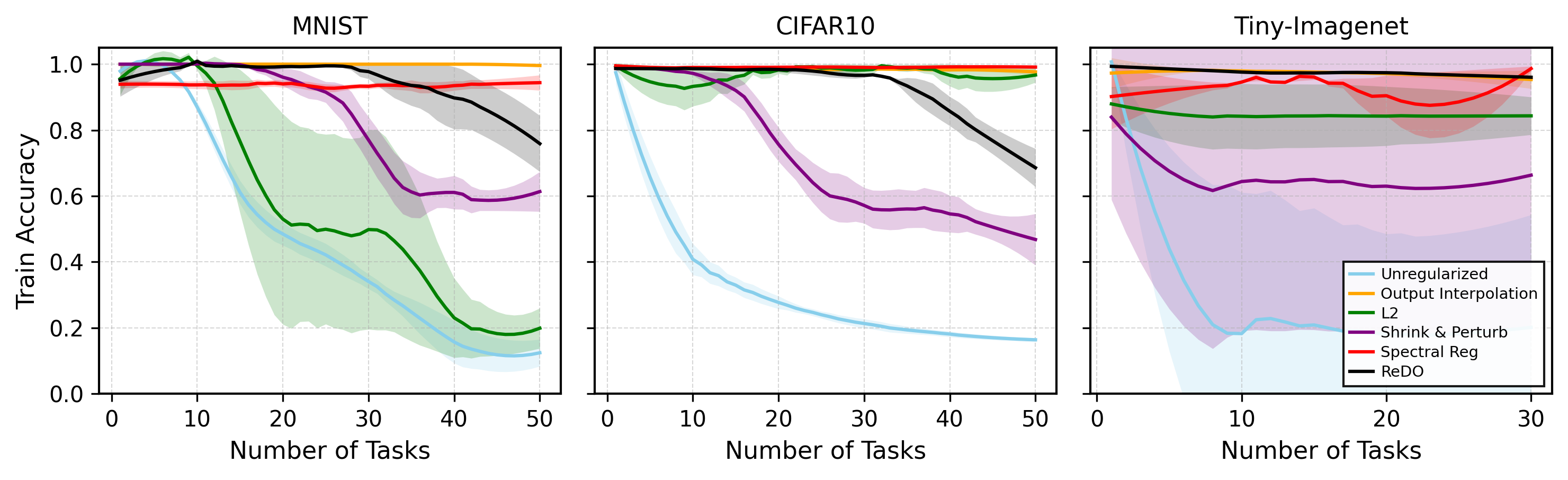}}
    \caption{\textbf{Trainability for Random Image Labeling tasks on MNIST and CIFAR10 using an MLP or a ResNet-18 model.} Output interpolation is more effective than other plasticity mitigation methods for these vision benchmarks. }
    \label{fig:rq1_random_label} 
  \end{center}
\end{figure*}

\subsection{Evaluation of Trainability}
\label{subsec:trainability}

Trainability refers to whether the model parameters can be optimized towards new tasks, and it is a fundamental requirement for continual learning \citep{lyle_understand,lewandowski2025learning}. Loss of trainability refers to a progressive decline in a model’s ability to learn new tasks. Extensive research on this phenomenon has examined the conditions under which neural networks fail to retain trainability in non-stationary environments~\citep{lyle2022understanding,elsayed2024addressing,Dohare2024, ma2024revisiting}.

\textcolor{black}{\textbf{Tasks and Models.} We experiment with the Random Image Labeling and Random Seq2seq environments (task formulations presented in \S\ref{problem_formulation}).}
In particular, we consider three image benchmark datasets with increasing scales: MNIST~\citep{mnist}, CIFAR-10~\citep{he2016deep}, and Tiny-ImageNet~\citep{le2015tiny}. Most previous plasticity studies have used selected datasets for their evaluation of trainability, but we include all three in our work.

\begin{figure}[!t]
    \includegraphics[width=\linewidth]{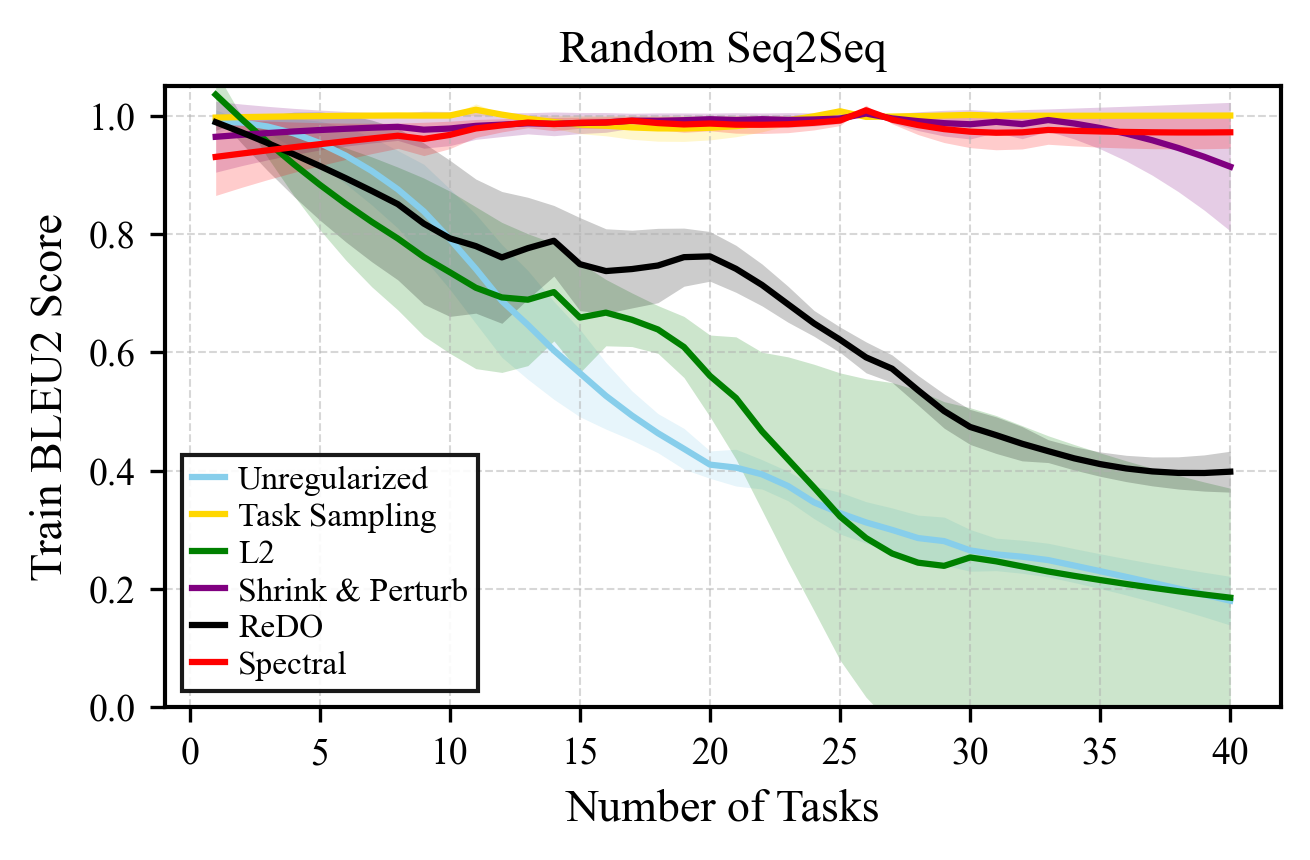} \caption{\textbf{Trainability for random Seq2Seq task on synthetic text using T5-small.} \textcolor{black}{Without any mitigators, T5-small has diminishing train BLEU2 score, suggesting loss of trainability. Task sampling is the most effective among all baselines in preserving plasticity.}} 
  \label{fig:random_seq2seq} 
\end{figure}

For MNIST, we utilize a 4-layer Multi-Layer Perceptron (MLP) with 256 units per layer. Each hidden layer consists of a linear transformation followed by Layer Normalization and a ReLU nonlinearity. For the more complex CIFAR-10 and Tiny-ImageNet tasks, we employ an off-the-shelf ResNet-18 \citep{he2016deep} architecture comprising four cascaded residual blocks with batch normalization and skip connections, totaling approximately 11 million parameters.

For Random Seq2Seq, we adopt a pre-trained T5-small model \citep{raffel2020exploring} but only keep the first Transformer block. Note that Transformers are powerful models; however, our Random Seq2Seq is synthetic and relatively simple. Therefore, we restrict the Transformer capacity to study the phenomenon of loss of plasticity in our research environment.

\textbf{Results.} 
Figs.~\ref{fig:rq1_random_label} and~\ref{fig:random_seq2seq} present the trainability results for aforementioned benchmarks. Results are averaged over 5 seeds; shaded regions span the minimum and maximum across seeds.

We observe loss of trainability in all the tasks. For example, in an unregularized MLP on the Random MNIST benchmark, the training accuracy collapses from 100\% to random guessing after just 50 tasks. This is consistent with findings in the previous work that neural networks lose plasticity with abruptly changing data distribution
~\citep{dohare2021continual,Dohare2024,lyle2022understanding,elsayed2024addressing,ma2024revisiting}. 

We compare standard mitigation strategies designed for abrupt changes against our proposed simulation of a gradually changing environment.

Among the mitigation strategies, performance varies significantly by the architecture and benchmarks. Shrink\&Perturb~\citep{warm_adams} and Spectral Regularization~\citep{lewandowski2025learning} excel only in the Seq2Seq language task, showing high volatility in vision benchmarks, particularly on complex datasets like Tiny-Imagenet. This disparity aligns with the findings by~\citeauthor{transformer_robust}, who suggest that Transformer models are inherently more robust to perturbations than CNNs \citep{transformer_robust}. L2 regularizated ResNet-18 sustains near perfect training accuracy for both larger vision benchmarls. However, the performance of L2 regularized MLP drops as quickly as an unregularized network, possibly due to dormant neurons and lack of skip connections. ReDO provides moderate remedy for plasticity loss in vision benchmarks via neuronal recycling; however, its performance on language tasks is less satisfactory. Previous work suggests that ReDO is ill-suited for Transformers, as the smooth nature of GELU/SiLU activations and the centering effects of Layer Normalization prevent the ``dying unit'' phenomenon that is common in ReLU-based CNNs~\citep{jain2026revive}. This phenomenon occurs when a neuron consistently outputs zero for all inputs, causing its gradient to vanish and preventing any further weight updates.

Our work investigates how a gradually changing environment affect plasticity. We find that a gradually changing environment---either simulated by output interpolation or task sampling---largely preserves the plasticity of neural networks. Output interpolation effectively preserves trainability in vision benchmarks, while task sampling is effective for language domains, matching or exceeding the performance of competing strategies. 
Admittedly, our simulation of gradually changing environments is not perfect, as we observe minor plasticity loss after task 35 for Random MNIST. That said, our approach still outperforms the best baseline from prior work. These results suggest that simulating a gradually changing environment is a simple yet theoretically grounded mitigation for the loss of trainability.

In summary, mitigation methods relying on resetting, regularization, or noise injection offer only partial mitigation; their strict global constraints on the optimization landscape lead to high instability. Further, these methods necessitate exhaustive hyperparameter tuning for each distinct environment; without precise calibration, their constraints often impede optimization. In contrast, our simulation of a gradually changing environment provides a robust, model-agnostic solution that preserves plasticity without requiring extensive hyperparameter search.

\subsection{Evaluation of Generalizability}
\label{subsec:generalizability}

Loss of generalizability refers to a model’s declined ability to apply learned knowledge to unseen data. Since trainability is a prerequisite for generalization, prior work has largely focused on the former \citep{dohare2021continual,lyle2022understanding,dormantSokar,lewandowski2024directionscurvatureexplanationloss,lewandowski2025learning}. However, because high training accuracy can be achieved through memorization, it is critical to evaluate generalizability to confirm that the model has learned underlying patterns rather than merely overfitting the training set.

\textcolor{black}{\textbf{Tasks and Models.} We investigate with the Random Pixel Permuting and Random Bigram Cipher environments. These permutation-based formulations, as detailed in \S\ref{problem_formulation}, are particularly well-suited for evaluating generalizability because they can be applied directly to the test splits of the benchmarks.}

\begin{figure}[!t]
    \centering
    \begin{minipage}[t]{0.5\textwidth}
        \centering
        \includegraphics[width=\linewidth]{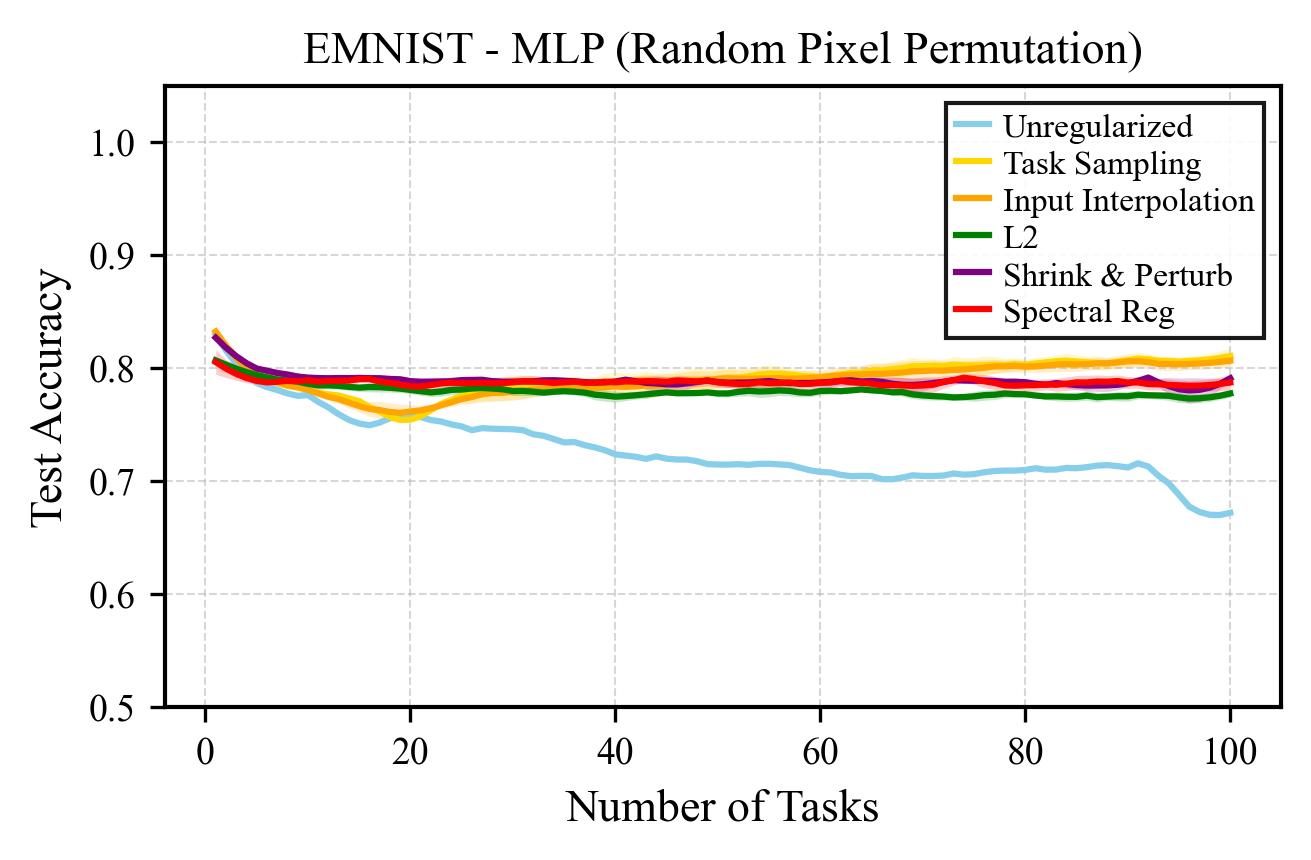}
        \captionof{figure}{\textbf{Continual learning with Random Pixel Permuting tasks on EMNIST using a 4-layer MLP model.} \textcolor{black}{Plasticity loss is mild in this setting, yet input interpolation and task sampling still achieve the highest test accuracy among all methods.}}
        \label{fig:rq2_pixel_permute}
    \end{minipage}
    \hfill
    \begin{minipage}[t]{0.5\textwidth}
        \centering
        \includegraphics[width=\linewidth]{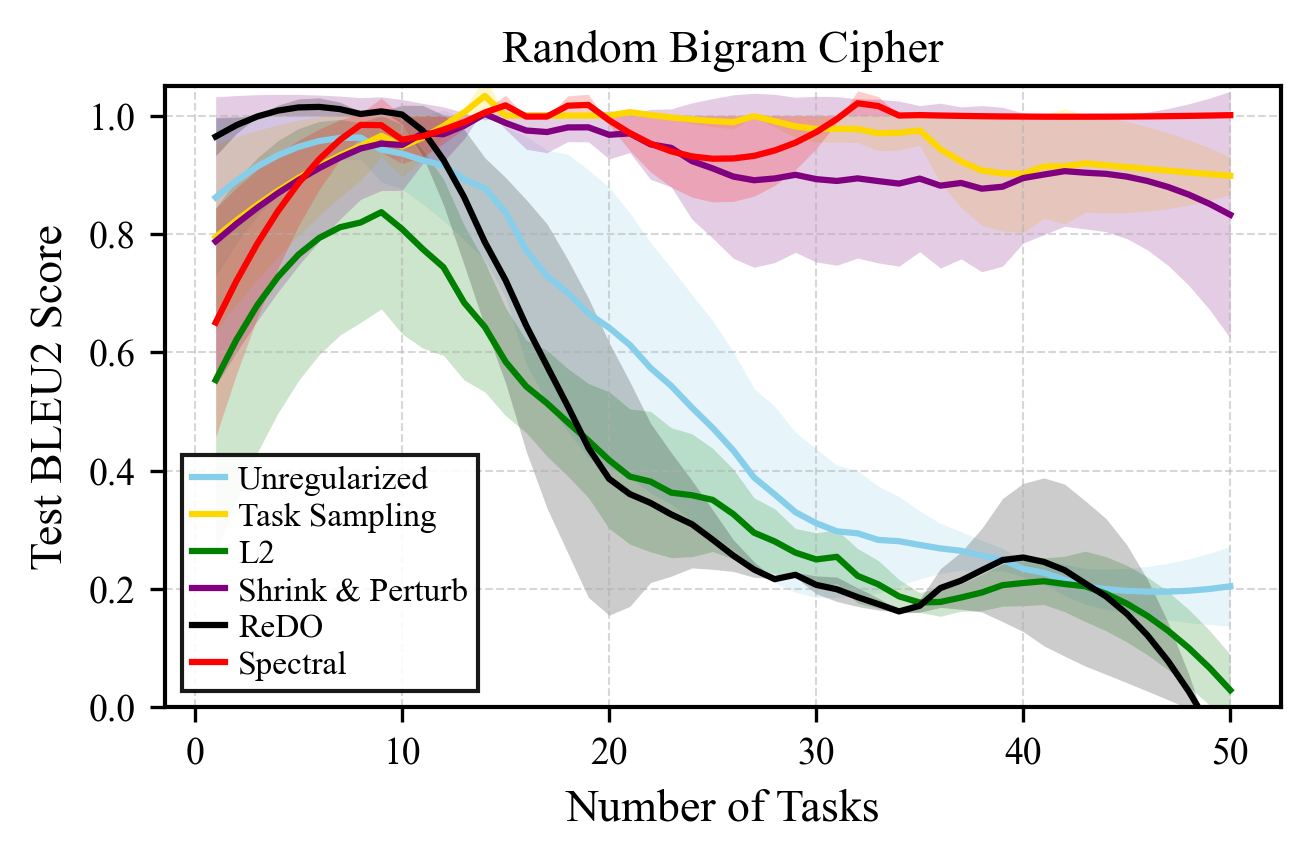}
        \captionof{figure}{\textbf{Generalizability evaluated by test BLEU2 score on Bigram Cipher tasks on customized T5-small model.} \textcolor{black}{Task sampling and Spectral Regularization maintain strong generalizability, while the unregularized baseline, L2, and ReDO degrade significantly.}}
        \label{fig:bigram_cipher}
    \end{minipage}
\end{figure}

\textbf{Results.} Figs.~\ref{fig:rq2_pixel_permute} and~\ref{fig:bigram_cipher} demonstrate the generalizability results for the benchmarks described above. 

For Random Pixel Permuting, we observe only a mild plasticity loss (test accuracy drops around 10\%) for the unregularized MLP on the Random EMNIST benchmark with 100 tasks.  Such a mild plasticity loss is consistent with previous work~\citep{kumar2024maintaining, lewandowski2025learning}, and is likely because the nature of the task allows the model to reuse previously learned image features, thereby facilitating adaptation. In this task, we observe all previously introduced mitigation methods are able to preserve plasticity to a large extent (Fig.~\ref{fig:rq2_pixel_permute}). Nevertheless, our task sampling and input interpolation (yellow and orange curves) remain the two most competitive methods at the 100th task.

In the Random Bigram Cipher task (Fig.~\ref{fig:bigram_cipher}), we observe a drastic loss of generalizability for an unregularized T5-small model, with the test BLEU2 score falling to 20\% after 50 tasks, reflecting high complexity of the task. Consistent with our Random Seq2Seq findings, ReDO and L2 regularization fail to retain generalizability in the language domain. Similarly, Spectral Regularization and Shrink\&Perturb successfully preserve generalizability, mirroring their strong performance on the Random Seq2Seq task. 

Overall, our proposed simulation of a gradually changing environment preserves generalizability in both the vision task and the complex language task, matching the performance of the best-performing mitigation methods designed for abrupt task changes. Experiment details are shown in Appendix~\ref{appendix:B}.

\subsection{In-Depth Analyses}
\label{subsec:analysis}

\begin{figure*}[!t] 
    \centering
    \begin{subfigure}[b]{0.44\textwidth}
    \centering
    \includegraphics[width=\linewidth]{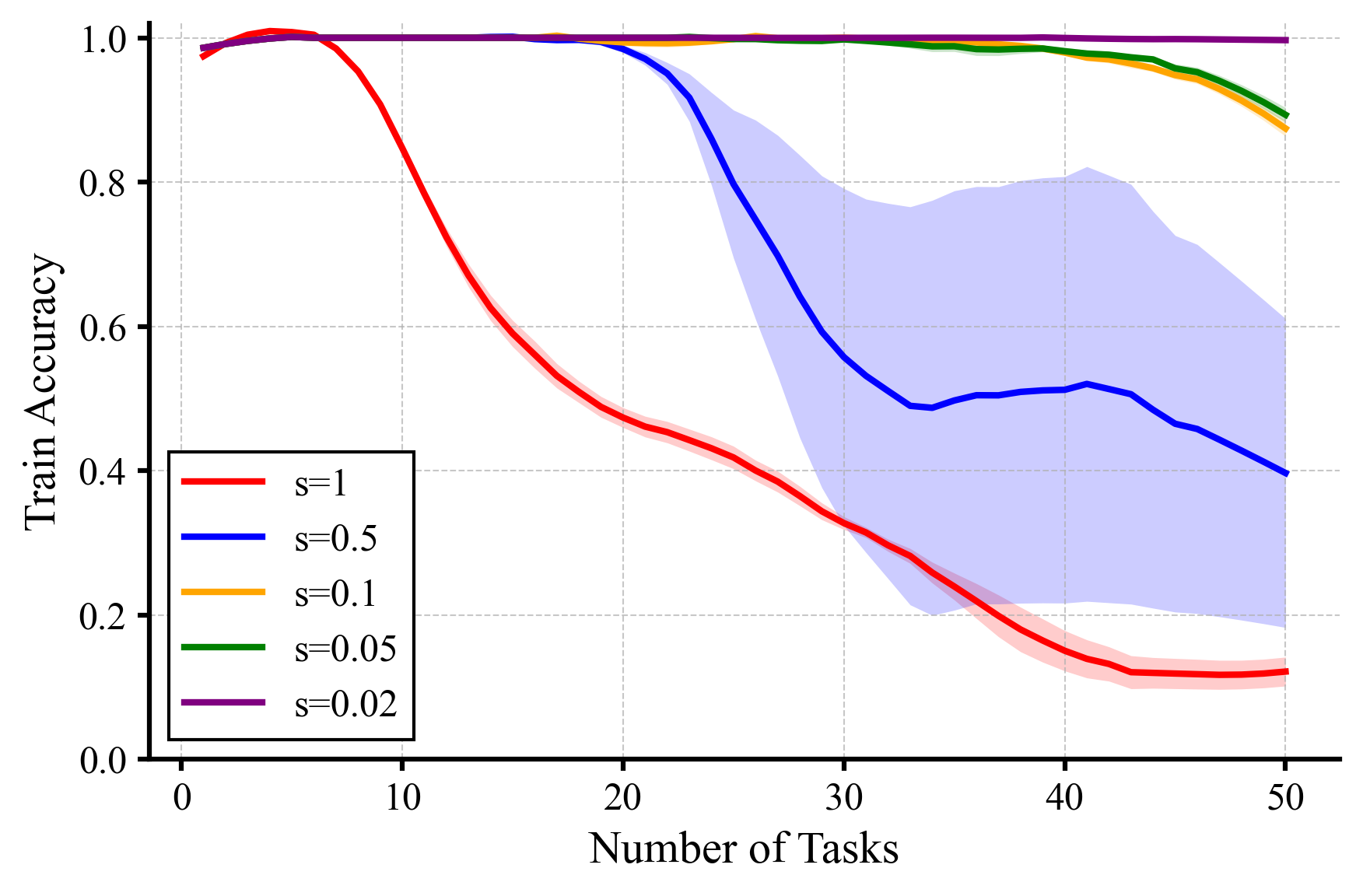}
        \caption{Random MNIST Output Interpolation}
        \label{fig:mnist_granularity}
    \end{subfigure}
    \hfill 
    \begin{subfigure}[b]{0.44\textwidth}
        \centering
        \includegraphics[width=\linewidth]{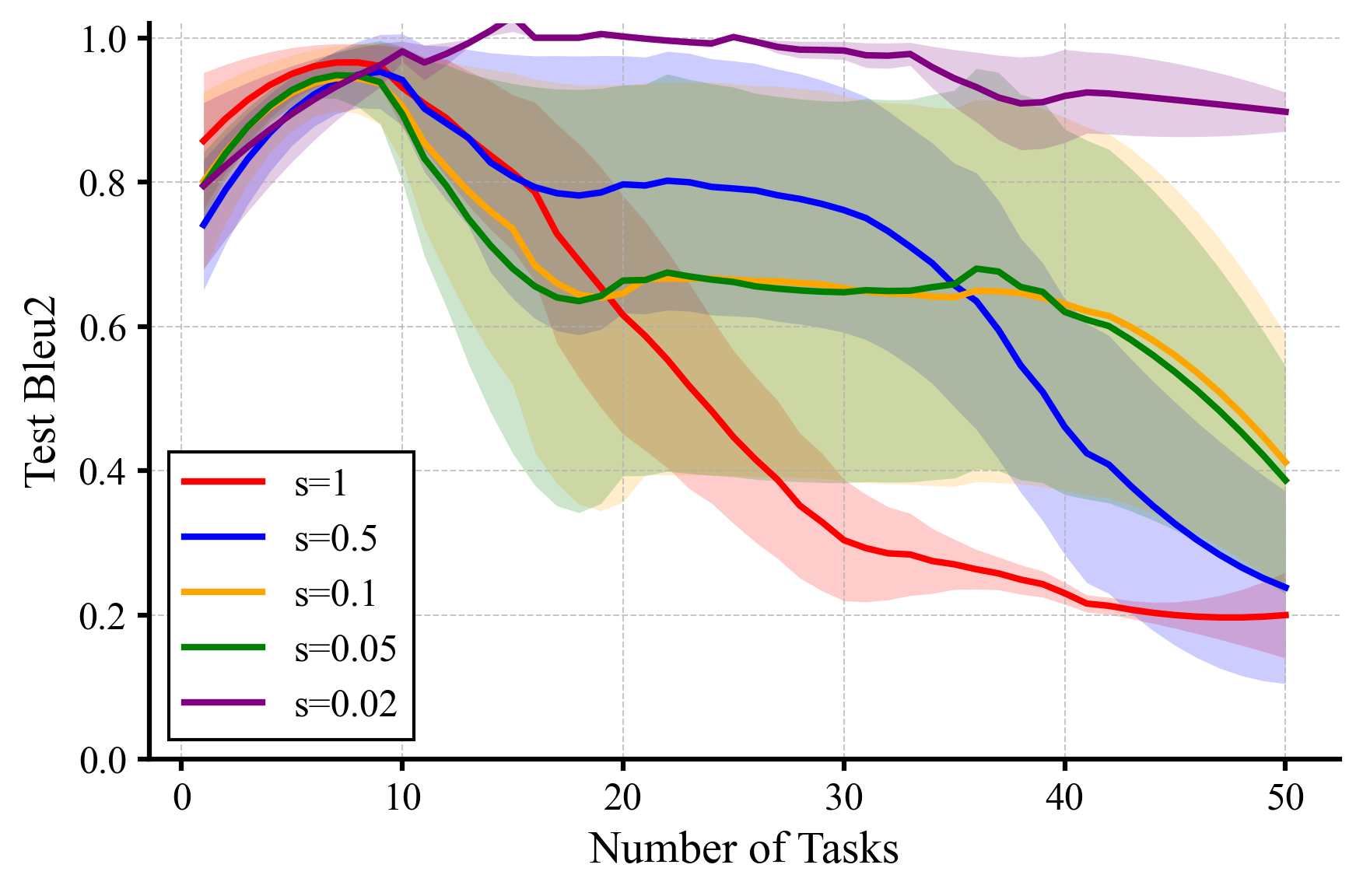}
        \caption{Bigram Cipher Task Sampling}
    \label{fig:cipher_granularity}
    \end{subfigure}
    
    \caption{\textbf{The effect of granularity of the interpolation step size on plasticity preseivation for both trainability and generalizability task.} \textcolor{black}{Abrupt task change ($s\!=\!1$) causes severe plasticity loss in both settings. As the step size decreases, plasticity is progressively better preserved, with the finest granularity ($s\!=\!0.02$) maintaining stable performance throughout. This confirms a monotonic relationship between transition smoothness and plasticity retention.}}
    \label{fig:combined_granularity}
\end{figure*}

\begin{figure*}[t]
    \centering
    \includegraphics[width=0.77\linewidth]{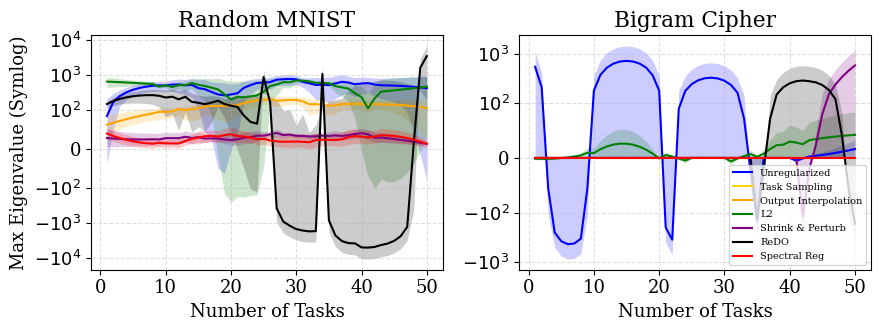}
    \caption{\textbf{Maximum eigenvalue for Random MNIST and Bigram Cipher.}
    Interpolation and task sampling keep the maximum eigenvalue lower across tasks, suggesting a less sharp loss landscape under gradual task transitions.}
    \label{fig:landscape_analysis}
\end{figure*}

\textbf{Granularity of Interpolation and Sampling.} To investigate the relationship between task transition smoothness and plasticity, we adopt two distinct experimental setups: the classic MNIST model with a 4-layer MLP model due to limited resources, and the Bigram Cipher setup with a T5-small model for generalizability. We adjust the interpolation coefficient $\alpha$ (as mentioned in \S\ref{problem_formulation}) from $0$ to $1$ with a step size ranging from $\{0.02, 0.05, 0.1, 0.5\}$. As seen in Figs.~\ref{fig:mnist_granularity} and~\ref{fig:cipher_granularity}, abrupt task change (step size $=1$) causes significant accuracy drop after $20$ tasks for Random MNIST and $12$ tasks for Bigram Cipher, respectively, while coarse-grained task change (step size $=0.5$) mitigate these declines but are still not satisfactory. In comparison, step size $0.02$ sustains near-perfect accuracy throughout the experiments. \textcolor{black}{We have also investigated the relationship between interpolation granularity and four neural network properties including maximum hessian eigenvalue, relative representation change, dormant unit fraction, and distance from initialization (shown in Appendix~\ref{appendix:C})}. Overall, this analysis verifies that a gradually changing environment does preserve plasticity, and that a finer-grained task transition preserves more plasticity.

\textbf{Loss Landscape Geometry.} To empirically characterize why gradually changing environments retain plasticity, we analyze from the loss landscape perspective. Previous work has shown that the maximum eigenvalue of the weight matrices can be interpreted as the sharpness of the loss landscape~\citep{dinh_sharpness, lyle_understand, lewandowski2024directionscurvatureexplanationloss}.  We evaluate the loss landscape geometry on a trainability task (Random MNIST) and a generalizability task (Bigram Cipher) with this metric.  As shown in Fig.~\ref{fig:landscape_analysis}, top-performing methods including Spectral Regularization and Shrink\&Perturb have relatively small eigenvalues. Notably, gradual task transitions (through interpolation and task sampling) naturally maintain low and stable eigenvalues without explicit regularization penalties, suggesting a smooth landscape for optimization.

\section{Conclusion}
Prior plasticity research has largely focused on an artificial setup of abrupt task change, which may not fully reflect real-world continual learning scenarios. In this paper, we emphasize that real-world environments often evolve gradually. We show both theoretically and empirically that plasticity can be preserved in a gradually changing environment. Even for abruptly changing setups, loss of plasticity can be effectively mitigated by simulating gradual change from task to task through input/output interpolation and task sampling.

\bibliography{example_paper}
\bibliographystyle{icml2026}

\newpage
\appendix
\onecolumn
\section{Proofs of Lemmas and Theorem}
\label{appendix:A}

For completeness, we restate the descent lemma~\citep{descent_lemma} below. 
\begin{lemma}[Descent Lemma]
\label{lemma:descent}
Let $f: \sR^d \to \sR$ be a differentiable function. 
If $f$ is $\beta$-smooth, then for all $\vx, \vy \in \sR^d$,
\begin{align}
    f(\vy) \le f(\vx) + \langle \nabla f(\vx), \vy - \vx \rangle + \frac{\beta}{2}\|\vy - \vx\|^2.
\end{align}
\end{lemma}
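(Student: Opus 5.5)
The statement to prove is the Descent Lemma (Lemma~\ref{lemma:descent}), which uses only $\beta$-smoothness (Def.~\ref{def:smooth}). The plan is the standard argument: write $f(\vy)-f(\vx)$ as an integral of the gradient along the segment from $\vx$ to $\vy$, then use the Lipschitz bound on the gradient to control how far this integral is from the linear term.

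First, fix $\vx,\vy\in\sR^d$ and set $\phi(\tau)=f(\vx+\tau(\vy-\vx))$ for $\tau\in[0,1]$. Since $f$ is differentiable, $\phi'(\tau)=\langle \nabla f(\vx+\tau(\vy-\vx)),\vy-\vx\rangle$. This derivative is continuous in $\tau$, because $\nabla f$ is Lipschitz and hence continuous. The fundamental theorem of calculus then gives
\begin{align}
f(\vy)-f(\vx)=\int_0^1 \langle \nabla f(\vx+\tau(\vy-\vx)),\vy-\vx\rangle\, d\tau .
\end{align}
Subtracting $\langle \nabla f(\vx),\vy-\vx\rangle=\int_0^1\langle \nabla f(\vx),\vy-\vx\rangle\,d\tau$ from both sides leaves the remainder
\begin{align}
\int_0^1 \langle \nabla f(\vx+\tau(\vy-\vx))-\nabla f(\vx),\vy-\vx\rangle\, d\tau .
\end{align}

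Second, bound the integrand by Cauchy--Schwarz and then by $\beta$-smoothness applied to the points $\vx+\tau(\vy-\vx)$ and $\vx$, which are a distance $\tau\|\vy-\vx\|$ apart. This gives the pointwise bound $\beta\tau\|\vy-\vx\|^2$. Integrating $\tau$ from $0$ to $1$ yields $\frac{\beta}{2}\|\vy-\vx\|^2$, which is exactly the claimed inequality.

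The only point needing care is justifying the fundamental theorem of calculus for $\phi$, i.e.\ that $\phi'$ is integrable. This follows from the continuity of $\nabla f$ noted above, so no step should pose a real obstacle.
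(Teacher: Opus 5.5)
Your proposal is correct and matches the paper's proof essentially step for step: the same parametrization along the segment, the fundamental theorem of calculus, subtraction of the linear term, and Cauchy--Schwarz with $\beta$-smoothness to get the pointwise bound $\beta\tau\|\vy-\vx\|^2$, followed by integration over $[0,1]$. Your justification that $\phi'$ is continuous, which is what licenses the fundamental theorem of calculus, is a small point of rigor the paper leaves implicit.
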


\begin{proof}
Let $\vh = \vy - \vx$ and define the scalar function $\phi(t) = f(\vx + t\vh)$ for $t \in [0, 1]$. 
By the fundamental theorem of calculus,
\begin{align}
    f(\vy) - f(\vx) = \int_0^1 \nabla f(\vx + t\vh)^\top \vh \, dt.
\end{align}
Subtracting and adding $\nabla f(\vx)^\top \vh$ inside the integral gives
\begin{align}
    f(\vy) - f(\vx) - \nabla f(\vx)^\top \vh 
    = \int_0^1 [\nabla f(\vx + t\vh) - \nabla f(\vx)]^\top \vh \, dt.
\end{align}
Since $f$ is $\beta$-smooth, we have 
$\|\nabla f(\vx + t\vh) - \nabla f(\vx)\| \le \beta t \|\vh\|$. 
Applying the Cauchy–Schwarz inequality and integrating over $t \in [0, 1]$ yields
\begin{align}
    f(\vy) - f(\vx) - \nabla f(\vx)^\top \vh 
    \le \int_0^1 \beta t \|\vh\|^2 \, dt = \frac{\beta}{2}\|\vh\|^2.
\end{align}
Rearranging the terms gives the desired result.
\end{proof}

\subsection{Proof of Lemma~\ref{lemma:gd_converge}}
\label{appendix:gd_converge}

\begin{proof}
Since $\vx_f^*$ is a local minimizer, $\nabla f(\vx_f^*) = 0$. 

By $\beta$-smoothness (Def.~\ref{def:smooth}),

\begin{align}
\label{eq:grad_bound}
\|\nabla f(\vx_k)\| 
= \|\nabla f(\vx_k) - \nabla f(\vx_f^*)\| 
\le \beta \|\vx_k - \vx_f^*\|.
\end{align}

\textbf{Part (1): Iterates remain inside $\sD_{\vx_f^*}$.}

From the GD update 
$\vx_{k+1} = \vx_k - \eta \nabla f(\vx_k)$ and 
\eqref{eq:grad_bound},
\begin{align}
\label{eq:triangle}
\|\vx_{k+1} - \vx_f^*\| 
\le \|\vx_k - \vx_f^*\| + \eta\beta\|\vx_k - \vx_f^*\|.
\end{align}

\textcolor{black}{If 
$\eta \le \frac{r - \|\vx_k - \vx_f^*\|}
{\beta\|\vx_k - \vx_f^*\|}$ is the binding constraint, 
then $\eta\beta\|\vx_k - \vx_f^*\| 
\le r - \|\vx_k - \vx_f^*\|$, so 
\eqref{eq:triangle} gives 
$\|\vx_{k+1} - \vx_f^*\| \le r$.  
\\
If instead $\eta \le 1/\beta$, then  $1/\beta \le \frac{r - \|\vx_k - \vx_f^*\|} {\beta\|\vx_k - \vx_f^*\|}$, which implies $\|\vx_k - \vx_f^*\| \le r/2$, so \eqref{eq:triangle} gives $\|\vx_{k+1} - \vx_f^*\| \le 2\|\vx_k - \vx_f^*\| \le r$. 
\\
In both cases $\vx_{k+1} \in \sD_{\vx_f^*}$, and the claim follows by induction.}

\textbf{Part (2): Geometric convergence.}

\textcolor{black}{By the descent lemma 
(Lemma~\ref{lemma:descent}) with $\vy = \vx_k - \eta\nabla f(\vx_k)$ and $\eta \le 1/\beta$:
\\
\begin{align}
f(\vx_{k+1}) 
\le f(\vx_k) 
- \frac{\eta}{2}\|\nabla f(\vx_k)\|^2.
\end{align}
\\
From local strong convexity (Def.~\ref{def:lsc}) and $\nabla f(\vx_f^*) = 0$, we have $f(\vx_k) - f(\vx_f^*) \ge \frac{\mu}{2}\|\vx_k - \vx_f^*\|^2$, which combined with~\eqref{eq:grad_bound} yields $\|\nabla f(\vx_k)\|^2 \ge 2\mu\bigl(f(\vx_k) - f(\vx_f^*)\bigr)$. 
\\
By substituting we have:
\\
\begin{align}
f(\vx_{k+1}) - f(\vx_f^*) 
\le (1 - \eta\mu)
\bigl(f(\vx_k) - f(\vx_f^*)\bigr).
\end{align}
\\
Since $\eta\mu \le \mu/\beta \le 1$, the function values converge geometrically, which implies geometric decay and $\vx_k \to \vx_f^*$ as $k\!\to\!\infty$.}
\end{proof}

\subsection{Proof of Lemma~\ref{lemma:linear_smooth_convex}}
\label{appendix:linear_smooth_convex}

\begin{proof}
The gradient of $g$ is $\nabla g(\vx) = \mW^\top \nabla f(\mW\vx)$.  
For any $\vx, \vy \in \sR^d$,
\begin{align}
\|\nabla g(\vx) - \nabla g(\vy)\| 
= \|\mW^\top (\nabla f(\mW\vx) - \nabla f(\mW\vy))\|
\le \|\mW\|^2 \cdot \beta \|\vx - \vy\|
\le \beta(1+\epsilon)^2 \|\vx - \vy\|,
\end{align}
so $g$ is $\beta' = \beta(1+\epsilon)^2$-smooth.

Next, since $f$ is $\mu$-strongly convex,
\begin{align}
f(\mW\vx) \ge f(\mW\vy) 
+ \nabla f(\mW\vy)^\top (\mW\vx - \mW\vy)
+ \tfrac{\mu}{2}\|\mW(\vx - \vy)\|^2.
\end{align}
By the chain rule, $\nabla g(\vy) = \mW^\top \nabla f(\mW\vy)$, and using 
$\|\mW(\vx - \vy)\| \ge \sigma_{\min}(\mW)\|\vx - \vy\| 
\ge \tfrac{1}{1+\epsilon}\|\vx - \vy\|$,
we have
\begin{align}
g(\vx) \ge g(\vy) 
+ \nabla g(\vy)^\top(\vx - \vy)
+ \tfrac{\mu}{2}\!\left(\tfrac{1}{1+\epsilon}\right)^{\!2}\!\|\vx - \vy\|^2.
\end{align}
Thus $g$ is $(r', \mu')$-locally strongly convex with $\mu' = \mu(1+\epsilon)^{-2}$.

Finally, for any $\vx$ satisfying $\|\vx - \vx^*_g\| \le r' = r / \sigma_{\max}(\mW)$,  
we have 
$\|\mW\vx - \vx_f^*\| 
\le \|\mW\| \cdot \|\vx - \vx_g^*\|
\le \sigma_{\max}(\mW) r' = r$,  
which implies $\mW\vx \in \sD_{\vx_f^*}$.
Hence $\vx \in \sD_{\vx^*_g}$, completing the proof.
\end{proof}

\subsection{Proof of Lemma~\ref{lemma:ball_in_new_bowl}}
\label{appendix:ball_in_new_bowl}

\begin{proof}
For any $\vx \in \sR^d$ satisfying $\|\vx - \vx^*_g\| \le r(1-\epsilon)$, we have
\begin{align}
\|\mW\vx - \vx^*_f\|
= \|\mW(\vx - \vx^*_g)\|
\le \|\mW\| \, \|\vx - \vx^*_g\|
\le \tfrac{1}{1-\epsilon} \, r(1-\epsilon)
= r.
\end{align}
Hence $\mW\vx \in \sD_{\vx^*_f}$, which implies $\vx \in \sD_{\vx^*_g}$. 
Therefore, the ball of radius $r(1-\epsilon)$ around $\vx^*_g$ is fully contained within the preimage of the strongly convex basin $\sD_{\vx^*_f}$.
\end{proof}

\subsection{Proof of 
Lemma~\ref{lemma:converge_to_shifted_minimizer}}
\label{appendix:converge_to_shifted_minimizer}

\begin{proof}
\textcolor{black}{
Since $g(\vx) = f(\mW\vx)$, we have 
$\nabla g(\vx) = \mW^\top \nabla f(\mW\vx)$. Setting 
$\nabla g(\vx_g^*) = 0$ and using full rank of $\mW$ 
gives $\nabla f(\mW\vx_g^*) = 0$, hence 
$\mW\vx_g^* = \vx_f^*$ and 
$\vx_g^* = \mW^{-1}\vx_f^*$.}

\textcolor{black}{The near-identity assumption 
$\|\mI - \mW\| \le \frac{\epsilon}{1+\epsilon} < 1$ 
ensures that the Neumann series 
$\mW^{-1} = (\mI - (\mI - \mW))^{-1} 
= \sum_{k=0}^{\infty}(\mI - \mW)^k$ converges, giving
\begin{align}
\|\mW^{-1} - \mI\| 
= \Bigl\|\sum_{k=1}^{\infty}(\mI - \mW)^k\Bigr\| 
\le \frac{\|\mI - \mW\|}{1 - \|\mI - \mW\|} 
\le \frac{\epsilon/(1+\epsilon)}{1/(1+\epsilon)} 
= \epsilon.
\end{align}}
The shift between the two minimizers is therefore bounded by
\begin{align}
\|\vx_f^* - \vx_g^*\|
= \|(\mI - \mW^{-1})\vx_f^*\|
\le \|\mI - \mW^{-1}\|\,\|\vx_f^*\|
\le \epsilon \|\vx_f^*\|.
\end{align}
\textcolor{black}{Substituting 
$\epsilon = \frac{r-c}{r-c+\|\vx_f^*\|}$ we have:
\begin{align}
\epsilon\|\vx_f^*\| 
= \frac{(r-c)\|\vx_f^*\|}{r-c+\|\vx_f^*\|} 
= (r-c)\!\left(1 - \frac{\|\vx_f^*\|}
{r-c+\|\vx_f^*\|}\right) 
= (r - c)(1-\epsilon).
\end{align}}

If $\|\vx - \vx_f^*\| \le c(1-\epsilon)$, then by the 
triangle inequality,
\begin{align}
\|\vx - \vx_g^*\| 
\le \|\vx - \vx_f^*\| + \|\vx_f^* - \vx_g^*\|
\le c(1-\epsilon) + (r - c)(1-\epsilon)
= r(1-\epsilon),
\end{align}
which implies $\vx \in \sD_{\vx_g^*}$ by 
Lemma~\ref{lemma:ball_in_new_bowl}. 

Since $g$ is \textcolor{black}{$\beta'$-smooth and 
$(r',\mu')$-locally strongly convex} in $\sD_{\vx_g^*}$ 
\textcolor{black}{by Lemma~\ref{lemma:linear_smooth_convex} 
(with $\beta' = \beta(1+\epsilon)^2$, 
$\mu' = \mu(1+\epsilon)^{-2}$, $r' = r(1-\epsilon)$)}, 
Lemma~\ref{lemma:gd_converge} ensures that 
$\vx_k \to \vx_g^*$ for a step size 
\textcolor{black}{$\eta \le \min\ \bigl(\frac{1}{\beta'},\; 
\frac{r'-\|\vx_k - \vx_g^*\|}
{\beta' \|\vx_k - \vx_g^*\|}\bigr)$}.

Note that the effective smoothness $\beta$ may vary slightly as the environment evolves.
\end{proof}

\subsection{Proof of Theorem~\ref{theorem}}
\label{appendix:theorem}

\begin{proof}
We begin with $\vx_0^*$, a benign minimizer of $f_0$, and initialize GD within its locally strongly convex and $\beta$-smooth basin $\sD_{\vx_0^*}$.  
When the loss transitions from $f_0$ to $f_1(\vx) = f_0(\mW_1\vx)$, the spectral constraint on $\mW_1$ ensures the minimizer shift is bounded:
\begin{align}
\|\vx_0^* - \vx_1^*\|
= \|\vx_0^* - \mW_1^{-1}\vx_0^*\|
\le \|\mI - \mW_1^{-1}\|\,\|\vx_0^*\|
\le \epsilon \|\vx_0^*\| 
\le (r - c)(1-\epsilon).
\end{align}
During optimization, Lemma~\ref{lemma:gd_converge} guarantees that the GD iterates $\vx_k$ contract toward $\vx_0^*$, eventually satisfying $\|\vx_k - \vx_0^*\| \le c(1-\epsilon)$.  
By Lemma~\ref{lemma:converge_to_shifted_minimizer}, this implies $\vx_k \in \sD_{\vx_1^*}$, ensuring convergence to the new minimizer $\vx_1^*$.

Assuming the result holds for $\vx_i^*$, the same argument applies recursively to the transition $f_i \mapsto f_{i+1}$ under the corresponding $\mW_{i+1}$ satisfying the same spectral condition.  
By induction, GD converges to each successive shifted minimizer $\vx_1^*, \vx_2^*, \dots, \vx_t^*$.
\end{proof}

\section{Experiment Details}
\label{appendix:B}

\subsection{Tasks and Models}

\subsubsection{Trainability Evaluations}
\paragraph{{Random Image Labeling.}} To simulate continual learning, the Random Image Labeling environment randomly relabels the images as a new task. It has been studied the most in loss of plasticity research~\cite{lyle_understand,lyle_disentangle,lewandowski2024directionscurvatureexplanationloss,kumar2024maintaining, tang2025churn} and is also adopted in our evaluation. In particular, we consider three image benchmark datasets with increasing scales: MNIST~\citep{mnist}, CIFAR-10~\citep{he2016deep}, and Tiny-ImageNet~\citep{le2015tiny}. Most previous plasticity studies have used selected datasets for their evaluation of trainability, but we include all three in our work.

For MNIST, we utilize a 4-layer Multi-Layer Perceptron (MLP) with 256 units per layer. Each hidden layer consists of a linear transformation followed by Layer Normalization and a ReLU nonlinearity. For the more complex CIFAR-10 and Tiny-ImageNet tasks, we employ an off-the-shelf ResNet-18 \citep{he2016deep} architecture comprising four cascaded residual blocks with batch normalization and skip connections, totaling approximately 11 million parameters.

\paragraph{Random Seq2Seq.} 
To enrich the evaluation of trainability preservation, we propose a Random Seq2Seq environment, following the spirit of Random Image Labeling. The  Random Seq2Seq environment presents a similar challenge of memorizing arbitrary mappings but in the text domain.

We adopt a pre-trained T5-small model \citep{raffel2020exploring} but only keep the first Transformer block. Note that Transformers are powerful models; however, our Random Seq2Seq is synthetic and relatively simple. Therefore, we restrict the Transformer capacity to study the phenomenon of loss of plasticity in our research environment.

\subsubsection{Generalizability Evaluations}

\paragraph{Random Pixel Permuting:} This environment random permutes the input image pixels for each task \citep{goodfellow2013empirical}. It is a standard benchmark for studying generalization in supervised continual learning \citep{Dohare2024,kumar2024maintaining,lewandowski2025learning}, a protocol we similarly adopt for our evaluation. We evaluate this setting using the EMNIST dataset~\citep{cohen2017emnist}, employing the same 4-layer MLP model utilized for the Random MNIST Labeling task.

\paragraph{Bigram Cipher:} To diversify the evaluation of generalization, we propose a Random Bigram Cipher task. This is a synthetic sequence-to-sequence task where the output sequence is generated by applying a cipher to each bigram of the input sequence. We evaluate this task using the same T5-small architecture employed in the Random Seq2Seq task.

\subsection{General Setups}

All experiments were optimized using Adam~\citep{adam}. 
The learning rate was selected via a sweep over 
$\{0.005, 0.001, 0.0005, 0.0001\}$; a learning rate of $0.001$ performed best for MNIST, Tiny-ImageNet, and the Bigram Cipher task, while $0.0001$ was optimal for the remaining experiments. All results are averaged over $5$ random seeds, and shaded regions in figures indicate the standard error of the mean.

\textcolor{black}{For our interpolation and task sampling methods, the key hyperparameter is the interpolation/sampling granularity, i.e., the number of intermediate micro-tasks per task transition. A finer granularity preserves more plasticity (see \S\ref{subsec:analysis}). We used $50$ intermediate steps per transition across all experiments for consistency.}

All experiments were conducted on a computer with four NVIDIA Quadro RTX 6000 GPUs (24GB each) and an Intel Core i9-9940X CPU (14 cores).

\subsection{Baseline and Hyperparameters}

\textcolor{black}{For all baseline methods, hyperparameters were 
tuned via grid search over the following candidate values:
\begin{itemize}
    \item \textbf{L2/Spectral Regularization} Regularization strength is selected from $\{0.1, \allowbreak\ 0.01, \allowbreak\ 10^{-3}, \allowbreak\ 10^{-4}, \allowbreak\ 10^{-5}, 10^{-6}, \allowbreak\ 10^{-7}\}$.
    \item \textbf{Shrink\&Perturb.} The shrink factor is chosen from \mbox{$\{0.5, 0.7, 0.9\}$}, and the perturb factor from \mbox{$\{0.1, 0.01, 10^{-3}, 10^{-4}, 10^{-5}\}$}.
    \item \textbf{ReDO.} Recycling frequency is selected from \mbox{$\{10, 100, 1000, 10000\}$} iterations.
\end{itemize}
The best configuration for each baseline was selected 
independently per benchmark.}

\subsection{Dataset and Training Configurations}

\textbf{MNIST.} $28 \times 28$ grayscale images from 10 
classes. We uniformly sampled $5120$ training examples 
across classes and used a batch size of $512$. All methods 
were trained for $240$ epochs per task.

\textbf{CIFAR-10.} $32 \times 32$ color images from 10 
classes. All $50{,}000$ training images were used with a 
batch size of $250$ for $120$ epochs per task.

\textbf{Tiny-ImageNet.} $64 \times 64$ color images 
spanning 200 classes. All $100{,}000$ training samples 
were used with a batch size of $250$ for $120$ epochs 
per task.

\textbf{Random Seq2Seq.} Synthetic text generated from 
the English alphabet with word length $5$, input sequence 
length $50$, target length $5$, and $5120$ total samples. 
Batch size of $64$, trained for $900$ epochs per task.

\textbf{Bigram Cipher.} Synthetic text generated from 
the English alphabet with word length $5$, input sequence 
length $10$, target length $10$. $1280$ samples for 
training and $512$ for testing. Batch size of $64$, 
trained for $200$ epochs per task.

\section{\textcolor{black}{Analysis on Task Transition Granularity}}
\label{appendix:C}

We investigate how interpolation granularity affects the loss landscape and internal network behavior by tracking three properties across different granularity levels. These properties are evaluated on the Random Image Labeling benchmark using a 4-layer MLP and the Bigram Cipher benchmark using T5-small.

\paragraph{Maximum Hessian Eigenvalue.}
The maximum eigenvalue of the Hessian measures the sharpness of the loss landscape~\citep{dinh_sharpness,lewandowski2024directionscurvatureexplanationloss,lyle2022understanding}.
A higher value indicates a sharper region, which is associated with greater difficulty in optimization. We compute the maximum Hessian eigenvalue at checkpoints after each task transition, under varying numbers of interpolation steps. As shown in Figure~\ref{fig:granularity_eigen}, finer-grained transitions consistently produce lower maximum eigenvalues, indicating that the optimizer stays in a flatter, better-conditioned region. This aligns with our theoretical analysis that local smoothness is preserved under small task changes.

\paragraph{Relative Representation Change.}
Relative representation change captures how much hidden
representations shift between consecutive tasks, serving as a
proxy for the network's ability to
learn~\citep{lewandowski2025learning}. We measure the change in
hidden representations between consecutive task checkpoints on a
fixed probe batch:
$$\mathrm{RelRepChange} = \mathbb{E}_{x}\left[
\frac{\|h_{\mathrm{curr}}(x)-h_{\mathrm{prev}}(x)\|}
{\|h_{\mathrm{prev}}(x)\|} \right],$$
where $h(x)$ is the hidden representation of input $x$.
As shown in Figure~\ref{fig:granularity_rep}, smaller
interpolation steps better preserve the ability to update hidden
representations between consecutive tasks. This suggests that
smoother task transitions maintain the network's capacity to adapt
to new data.

\begin{figure*} [t!]
    \centering
    \includegraphics[width=0.8\linewidth]{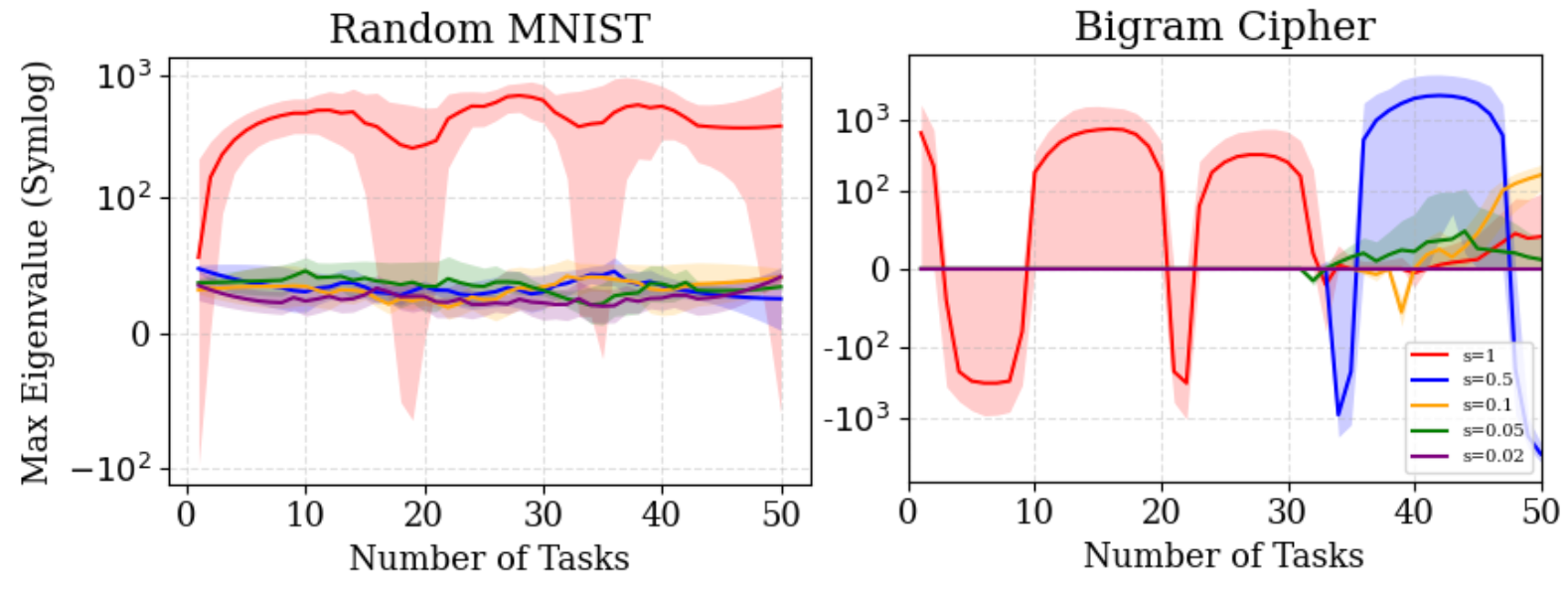}
    \caption{Finer-grained transitions consistently produce lower maximum eigenvalues, indicating that the optimizer remains in a flatter, better-conditioned region of the loss landscape. This aligns with our theoretical analysis that under small task changes, the local smoothness is preserved.}
    \label{fig:granularity_eigen}
\end{figure*}

\begin{figure*}[t!]
    \centering
    \includegraphics[width=0.8\linewidth]{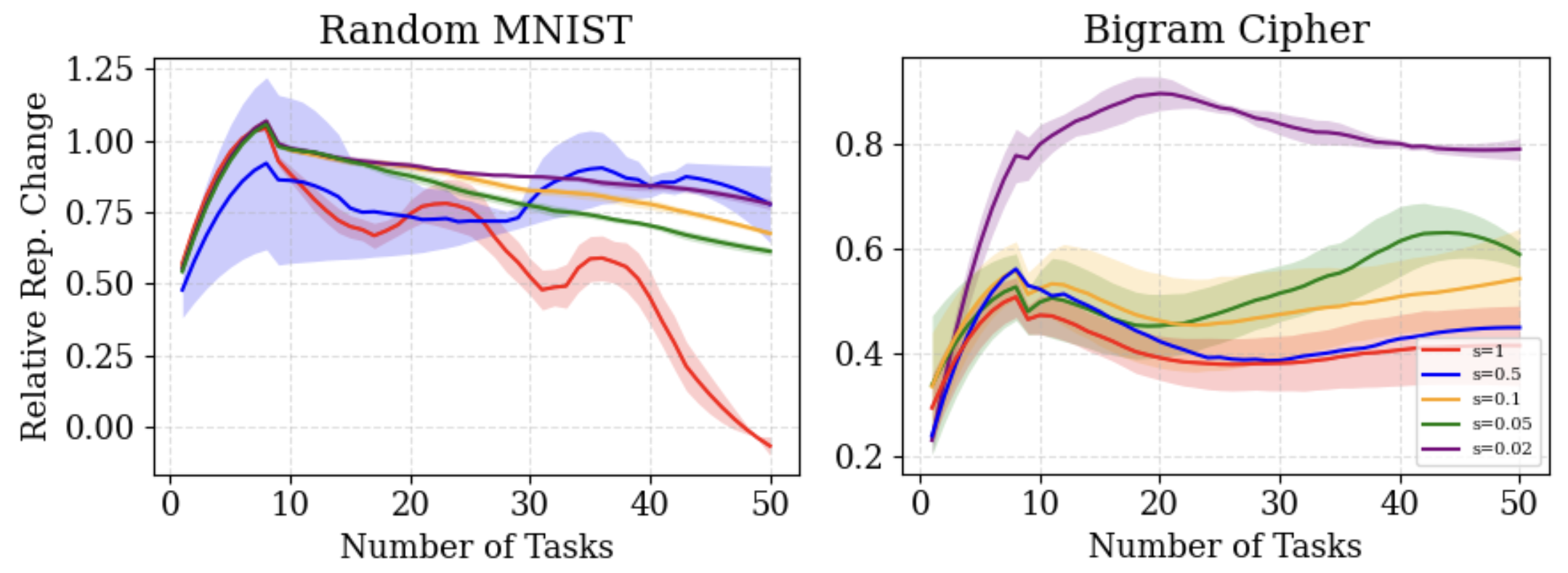}
    \caption{Smaller interpolation steps better preserve the ability to update hidden representations between consecutive tasks, indicating that smoother task changes preserve the network’s ability to learn (plasticity).}
    \label{fig:granularity_rep}
\end{figure*}

\begin{figure*}[t!]
    \centering
    \includegraphics[width=0.8\linewidth]{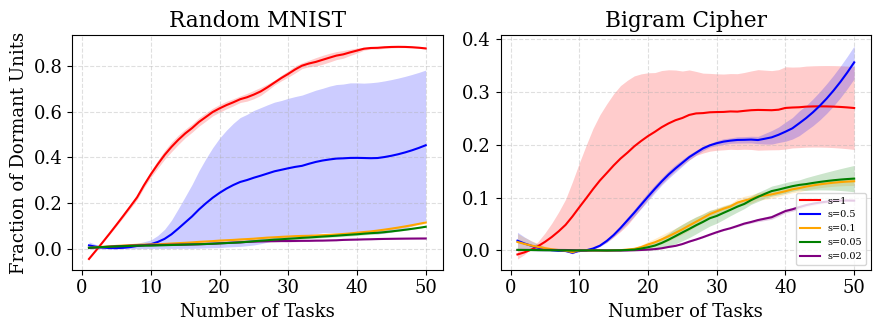}
    \caption{Finer-grained transitions consistently have a lower fraction of dormant neurons while also preserving plasticity.}
    \label{fig:granularity_dormant}
\end{figure*}

\paragraph{Dormant Unit Fraction.}
A dormant neuron is one that rarely fires, and a high dormant
fraction has been linked to plasticity
loss~\citep{Dohare2024, dormantSokar}. Following~\citet{Dohare2024},
a unit is dormant if its post-ReLU activation is positive on fewer
than $\tau = 1\%$ of probe inputs:
$$\mathrm{DormantFraction} = \frac{ \sum_{l=1}^{L}\sum_{i=1}^{H^l}
\mathbf{1}\!\left[ \frac{1}{N}\sum_{n=1}^{N}\mathbf{1}\!\left[
a_{i,n}^l > 0\right] \le \tau \right] }{\sum_{l=1}^{L} H^l},$$
where $l$ is the layer index, $H^l$ the number of hidden units
in layer $l$, $N$ the number of probe inputs, and $a_{i,n}^l$
the post-ReLU activation of neuron $i$ in layer $l$ on input $n$.
As shown in Figure~\ref{fig:granularity_dormant}, finer-grained
transitions consistently produce fewer dormant neurons. This is
consistent with prior findings that plasticity loss is closely
tied to neuron dormancy, and suggests that gradual transitions
help keep more of the network active and trainable.

\section{Limitations and Future Work}
\label{appendix:D}
Our theoretical analysis assumes Lipschitz smoothness and local strong convexity of the loss landscape, which may not hold for all real-world tasks. That said, both are standard assumptions in optimization theory. Local strong convexity naturally arises near well-behaved minima through second-order Taylor approximation, and Lipschitz smoothness is widely adopted in convergence analysis~\citep{bian2024make,wu2024meta}.

For empirical analysis, we evaluated on four tasks, which may not cover all possible environments. However, we selected image benchmarks widely used in prior work and additionally proposed more challenging language tasks to broaden the evaluation. Therefore, we believe our experiments are broad and convincing.

In the future, we are interested in automatically pre-determining the appropriate task transition granularity for a given continual learning application, rather than relying on manual selection.

\end{document}